\newcommand{\newowa}{proxCSL\xspace}
\newtheorem{thm}{Theorem}
\newtheorem{lem}{Lemma}
\newtheorem{defn}{Definition}
\pgfplotsset{compat=newest}
\pgfplotsset{every axis/.append style={
                    label style={font=\footnotesize},
                    tick label style={font=\footnotesize}  
                    }}
\DeclareMathOperator*{\argmin}{arg\,min}
\let\L\undefined %
\let\O\undefined %
\newcommand{\L}{\mathcal{L}}
\newcommand{\X}{\mathcal{X}}
\newcommand{\Y}{\mathcal{Y}}
\newcommand{\I}{\mathcal{I}}
\newcommand{\wowa}{\hat{w}_{\mathrm{owa}}}
\newcommand{\proxL}{\tilde{\mathcal{L}}^{(t)}}
\newcommand{\O}{\mathcal{O}}
  \providecommand\BibTeX{{%
    \normalfont B\kern-0.5em{\scshape i\kern-0.25em b}\kern-0.8em\TeX}}}
\begin{document}

\title{High-Dimensional Distributed Sparse Classification with
Scalable Communication-Efficient Global Updates}

\author{Fred Lu}
\affiliation{%
  \institution{Booz Allen Hamilton}
  \city{McLean}
  \country{USA} 
  }
\affiliation{
  \institution{University of Maryland, Baltimore County}
  \city{Baltimore}
  \country{USA}
}
\email{lu\_fred@bah.com}

\author{Ryan R. Curtin}
\affiliation{%
  \institution{Booz Allen Hamilton}
  \city{McLean}
\country{USA} 
  }
\email{curtin\_ryan@bah.com}

\author{Edward Raff}
\affiliation{%
  \institution{Booz Allen Hamilton}
  \city{McLean}
  \country{USA} 
  }
\affiliation{
  \institution{University of Maryland, Baltimore County}
  \city{Baltimore}
  \country{USA}
}
\email{raff\_edward@bah.com}

\author{Francis Ferraro}
\affiliation{%
  \institution{University of Maryland, Baltimore County}
  \city{Baltimore}
\country{USA} 
  }
\email{ferraro@umbc.edu}

\author{James Holt}
\affiliation{%
  \institution{Laboratory for Physical Sciences}
  \city{College Park}
  \country{USA} 
  }
\email{holt@lps.umd.edu}

\renewcommand{\shortauthors}{Fred Lu, Ryan R. Curtin, Edward Raff, Francis Ferraro, \& James Holt}

\begin{abstract}
As the size of datasets used in statistical learning continues to grow,
distributed training of models has attracted increasing attention.
These methods partition the data and exploit parallelism to reduce memory and runtime,
but suffer increasingly from communication costs as the data size or the number of iterations grows.
Recent work on linear models has shown that a surrogate likelihood can be optimized locally to iteratively improve on an initial solution in a communication-efficient manner.
However, existing versions of these methods experience multiple shortcomings as the data size becomes massive,
including diverging updates and efficiently handling sparsity.
In this work we develop solutions to these problems which enable us to learn a communication-efficient distributed logistic regression model
even beyond millions of features.
In our experiments we demonstrate a large improvement in accuracy over distributed algorithms with only a few distributed update steps needed,
and similar or faster runtimes. Our code is available at \url{https://github.com/FutureComputing4AI/ProxCSL}.
\end{abstract}

\begin{CCSXML}
<ccs2012>
   <concept>
       <concept_id>10010147.10010919.10010172</concept_id>
       <concept_desc>Computing methodologies~Distributed algorithms</concept_desc>
       <concept_significance>500</concept_significance>
       </concept>
   <concept>
       <concept_id>10010147.10010257.10010293.10010307</concept_id>
       <concept_desc>Computing methodologies~Learning linear models</concept_desc>
       <concept_significance>500</concept_significance>
       </concept>
   <concept>
       <concept_id>10002950.10003648.10003704</concept_id>
       <concept_desc>Mathematics of computing~Multivariate statistics</concept_desc>
       <concept_significance>300</concept_significance>
       </concept>
 </ccs2012>
\end{CCSXML}

\ccsdesc[500]{Computing methodologies~Distributed algorithms}
\ccsdesc[500]{Computing methodologies~Learning linear models}
\ccsdesc[300]{Mathematics of computing~Multivariate statistics}

\keywords{distributed algorithms; communication-efficient surrogate likelihood; sparse models}

\maketitle

\section{Introduction}
\label{intro}

Over the past decade, the size of datasets used in statistical and machine learning has increased dramatically.
When the number of samples and covariates in a dataset becomes sufficiently large,
even the training of linear models over the entire dataset becomes computationally challenging.
This has sparked a flurry of interest in distributed training and inference methods.
By splitting the data over multiple machines,
local training processes can be run in parallel to save memory and runtime.
Multiple works have studied the distributed training of logistic regression models~\cite{zhuang2015distributed, gopal2013distributed, lin2014large},
partitioning the dataset along samples or features and iteratively communicating gradients or gradient surrogates.

However, when many iterations are needed for convergence,
the communication cost of iterative distributed algorithms start to dominate.
For example, when the number of features $d$ of the dataset is massive,
second-order optimization methods which need to communicate $\mathcal{O}(d^2)$ information become impractical.
Yet first-order methods,
while communicating only $\mathcal{O}(d)$ information at a time,
have slower convergence guarantees so may be even more inefficient due to the extra rounds of communication needed.

To alleviate this bottleneck, recent works have proposed methods to train distributed linear models with relatively little communication.
Such methods are first initialized with a distributed one-shot estimator across a dataset which is partitioned across multiple nodes. 
To do this, the linear model objective is solved locally on each machine,
and the results are transmitted to a central processor which merges the local solutions \cite{chen2014split}.

From this initial estimate, such methods then obtain gradient information from all the partitions. 
The local machine can then solve a modified objective which takes into account the global gradient.
This process can be iterated leading to convergence to the full data solution.
Such an approach can be interpreted as an update step which only communicates first-order,
but uses local second- (or higher-) order information to achieve better convergence rates.
Many recent papers have studied variants of this underlying approach,
including~\cite{shamir2014communication,jordan2018communication,wang2017efficient,fan2023communication}.
These all share the underlying update objective,
referred to as the \textit{communication-efficient surrogate likelihood} (CSL).

Theory has been progressively developed for these methods showing a favorable rate of convergence under certain conditions.
These can include, for example, nearness of the local estimated gradient to the global gradient,
and sufficient strong convexity of the Hessian.
In practice, as the number of features or partitions of the dataset grows,
these conditions often fail to hold,
leading to diverging solutions.
For high-dimensional data, having sparse and interpretable model weights is also of interest.
Yet introducing sparsity further complicates the problem,
as few standard solvers can solve the CSL objective with $L_1$ regularization in an efficient manner.
These are important limitations for the practical use of such methods for training large-scale academic or industry models.
In these real-world scenarios, the data dimensionality can be exceedingly large,
while also being sparse,
leading to high correlations among features and poor conditioning of the objective.
Existing experimental results from these prior works have not assessed their methods on data of this size,
as they have only tested on moderately-sized data with low dimensionality $d$ relative to the partition sample size $n$.

In our work, we first show that a standard implementation of CSL methods fails to effectively learn sparse logistic regression models on these high-dimensional datasets.
We next develop an effective solver for the $L_1$-regularized CSL objective which scales efficiently beyond millions of features,
and prove that it converges to the correct solution.
Experimentally this approach attains higher accuracies than other methods when the solution is highly sparse.
However, at low regularizations when the solution is only moderately sparse, the solution to the CSL objective often diverges sharply, leading to poor update performance.
To address this, we develop an adaptive damping method to stabilize the CSL objective for high-dimensional solutions.
Using this technique, we demonstrate across multiple single-node and multi-node distributed experiments that our method successfully performs communication-efficient updates to improve accuracy across a wide range of sparsity settings.

\section{Background and Related Work}
\label{sec:related_work}

\subsection{Sparse logistic regression}

We assume a dataset $\mathcal{D} = (\mathcal{X}, \mathcal{Y})$ where $\mathcal{X} = \{ x_1, \ldots, x_N \}$
consists of $N$ samples in $d$ dimensions
(that is, each $x_i \in \mathbb{R}^d$).
The samples are labeled by $\mathcal{Y} = \{ y_1, \ldots, y_N \}$, where each $y_i \in \{0, 1\}$.

The standard approach to obtaining a sparse logistic regression solution is to use L1 regularization (also known as the LASSO penalty) \cite{hastie2015statistical}.
The objective of this problem is
\begin{equation}
    \label{eqn:obj}
    w^* \coloneqq \argmin_{w \in \mathbb{R}^d} \frac{1}{N} \sum_{i = 1}^{N} \ell(y_i, x_i^\top w) + \lambda \| w \|_1,
\end{equation}
\noindent where $\ell(y, z) = \log(1 + e^z) - yz$.
By setting $\lambda$ appropriately, the solution $\hat{w}$ can show good performance while having few nonzeros compared to the dimensionality of the data: $\| w \|_0 \ll d$.

While many algorithms exist to solve the problem \cite{boyd2011distributed,lee2014proximal,goldstein2014field},
iterated proximal Newton steps using coordinate descent to solve a quadratic approximation of the objective are known to be especially efficient,
seeing wide use in popular machine and statistical learning software \cite{fan2008liblinear,friedman2010regularization}.
In particular, the \textit{newGLMNET} algorithm in the LIBLINEAR library is perhaps the most commonly used solver \cite{yuan2011improved}. 

\subsection{Distributed estimation}
As datasets grow in size, the memory and computational limit of a single machine leads practitioners to seek distributed methods. 
One approach uses exact iterative methods starting from scratch,
which are usually based on adding parallelism to standard single-core algorithms. 
For example, LIBLINEAR-MP is a modified version of newGLMNET which uses multiple cores for certain inner linear algebra operations \cite{zhuang2018naive}. 
Alternatively, stochastic gradient methods allow data to be partitioned across machines or to be sampled at each iteration, reducing the memory requirement \cite{zinkevich2010parallelized}.

For even larger datasets, partitioning data across multiple nodes becomes necessary.
Distributed methods which handle splitting by samples include distributed Newton methods \cite{shamir2014communication,zhuang2015distributed} and ADMM \cite{boyd2011distributed},
while splitting over features has been proposed in block coordinate descent implementations such as \cite{trofimov2015distributed,richtarik2016distributed}.

An important limitation to all these approaches is the communication overhead involved in transmitting information (e.g. gradients) across nodes.
Because of this, when data becomes especially large or many iterations are needed for convergence,
communication costs start to dominate.
As the size of data further increases, one-shot or few-shot methods become increasingly attractive by eliminating most of the communication overhead to obtain an approximate solution.

\subsection{One-shot estimation}
\label{sec:oneshot}

Suppose the $N$ samples of $\mathcal{D}$ are partitioned across $p$ nodes or machines,
and let $\{\mathcal{D}_1, \ldots, \mathcal{D}_p\}$ denote the samples on each partition, with each $\mathcal{D}_i = (\mathcal{X}_i, \mathcal{Y}_i)$.
For simplicity we assume each node has $n$ samples, but this is not required. We define global and local objective functions respectively as
\begin{equation}
    \mathcal{L}(w) \coloneqq \frac{1}{N}\sum_{i=1}^N \ell(y_i, x_i^\top w) + \lambda \lVert w \rVert_1 \label{eq:global_obj}
\end{equation}
and 
\begin{equation}
    \mathcal{L}_k(w) \coloneqq \frac{1}{n}\sum_{(x_i, y_i) \in \mathcal{D}_k} \ell(y_i, x_i^\top w) + \lambda \lVert w \rVert_1 \label{eq:local_obj}
\end{equation}

Each machine first locally solves (\ref{eq:local_obj}) to obtain weight vector $\hat{w}_{(k)}$.
Then the weights are communicated to a central merge node, where they need to be efficiently merged. 
In the \textit{naive average}, a uniform average is taken:
\begin{equation}
\hat{w}_{\textrm{na}} \coloneqq \frac{1}{p} \sum_{k = 1}^p \hat{w}_{(k)}. \label{eq:naive_avg}
\end{equation}

While naive averaging is asymptomptically optimal for nearly unbiased linear models \cite{zhang2012communication},
in high-dimensional models higher-order loss terms cause increasing approximation error \cite{rosenblatt2016optimality}.
Other merge strategies such as improved weightings or debiasing steps have been explored, as in \cite{chen2014split,liu2014distributed,zhang2012communication,lee2017communication}.
Many of these do not scale well to high-dimensional data, as discussed in \cite{izbicki2020distributed}.

A more recent work proposed computing the merge weights by performing a second-stage empirical risk minimization over a subsample of the original data \cite{izbicki2020distributed}. Let $\mathcal{D}_{\mathrm{sub}}$ represent the subsampled data, which is usually just the local data $\mathcal{D}_1$ stored on the merge node. The local solutions are combined column-wise into a matrix $\hat{W}\in \mathbb{R}^{d \times p}$. Then the merge weighting $v\in \mathbb{R}^p$ is estimated as

\begin{equation}
\hat{v} \coloneqq \argmin_{v \in \mathbb{R}^p} \frac{1}{|\mathcal{D}_{\mathrm{sub}}|}\sum_{(x_i, y_i) \in \mathcal{D}_{\mathrm{sub}}} \ell(y_i, x_i^\top \hat{W} v) + \lambda_{\mathrm{cv}} \| v \|_2.
\label{eqn:cv_obj}
\end{equation}
where $\lambda_{\mathrm{cv}}$ is chosen using cross-validation.
Then the final solution is $\hat{w}_{\mathrm{owa}} \coloneqq \hat{W} \hat{v}$.

This method is fast and scalable to the largest datasets,
while generally improving over prior one-shot estimators \cite{izbicki2020distributed}.

\subsection{Communication-efficient updates}
\label{sec:more_communication}

Non-interactive estimators are approximate and degrade in performance as the local sample size $n$ decreases,
which happens when the number of partitions $p$ grows.
As a result, our interest is in update procedures which can be iterated to improve the initial estimator,
ideally approaching the full data solution.

Similar frameworks for iterative global updates have been proposed in algorithms, such as DANE \cite{shamir2014communication}, ILEA \cite{jordan2018communication}, and EDSL \cite{wang2017efficient}. For simplicity, we will adopt the term \textit{communication-efficient surrogate likelihood} (CSL) from \cite{jordan2018communication} and refer to works using this framework as CSL-type methods.

Broadly, they propose solving locally the objective
\begin{equation}
    \tilde{\mathcal{L}}_k(w) \coloneqq \mathcal{L}_k(w) + \Big(  \nabla \mathcal{L}(\hat{w}) - \nabla \mathcal{L}_k(\hat{w}) \Big)^\top w \label{eq:csl}
\end{equation}

This is motivated as optimizing a Taylor expansion of the local objective (\ref{eq:local_obj}),
where the local gradient $\nabla \mathcal{L}_k(\hat{w})$ is replaced with the global gradient $\nabla \mathcal{L}(\hat{w})$.
The affine term can be viewed as a first-order correction for the gradient direction.
To give further intuition, the higher-order derivatives beyond the Hessian are disregarded if we take a quadratic approximation of the local objective
\begin{equation} q_{\mathcal{L}_k}(w) = \nabla \mathcal{L}_k(\hat{w})^\top \delta   + \frac{1}{2} \delta^\top H_{(k)}(\hat{w}) \delta \end{equation}
where $\delta \coloneqq w - \hat{w}$ and ignoring constant terms.
If so, then optimizing CSL simplifies to finding
\begin{equation}  \argmin_\delta \nabla \mathcal{L}(\hat{w})^\top \delta + \frac{1}{2} \delta^\top H_{(k)}(\hat{w}) \delta  \end{equation}
which is equivalent to a quasi-Newton step
using global gradients and local Hessian.

Depending on the method, the local objective can either be updated only on the main node (CSL) or on all nodes simultaneously (DANE).
The latter requires another round of communication and averaging per iteration.
The result of the update $\hat{w}^{(1)}$ then becomes the starting point for the next update iteration. 

Other strategies for communication-efficient updates have been proposed. The ACOWA approach~\cite{lu2024optimizing} also seeks to reduce the impact of many processors $p$, by performing two rounds of computation and attempting to adjust the loss/increase information sharing in those two rounds over its predecessor OWA~\cite{izbicki2020distributed}. Single-machine-only algorithms based on lock-free parallelism as a similar issue, where the Hogwild algorithm~\cite{,recht2011hogwild}  would regularly diverge, and a lock-free approach SAUS~\cite{raff2018linear,JMLR:v18:16-131} attempted to reduce divergences with careful design. In contrast, our approach is iterative but does not need many rounds of iteration in practice and supports both distributed and single-machine parallelism.
\subsection{Challenges for scaling CSL-like methods}

Our work aims to solve practical and theoretical concerns when applying the CSL framework to update models on massive datasets and highly distributed systems.
We first identify challenges in the existing methods.

\textbf{Sparsity.} When the dimensionality of the data is enormous, model sparsity is a desirable property.
For the case of $L_1$-regularized linear models, the local loss term $\mathcal{L}_k(w)$ includes a $\lambda \lVert w \rVert_1$ term.
To efficiently optimize this objective requires techniques specialized for handling the non-differentiable 1-norm.
While this setting has been discussed or studied in \cite{jordan2018communication, wang2017efficient,fan2023communication},
none of the prior works proposed or specified what solver to use. 

Thus a practitioner must apply an out-of-the-box solver or implement their own.
Due to the size of data involved, first-order or dual solvers such as proximal gradient descent or ADMM would likely converge slowly.
In our experiments we instead use OWL-QN, a sparse variant of L-BFGS \cite{andrew2007scalable}, which we believe to be the fastest standard solver.
Because it uses approximate second-order information, it has faster convergence than the alternatives while still scaling up to high-dimensional data.
We find that this baseline implementation is adequate on smaller datasets, where relatively few features are impactful, but often fails to converge or return sparse solutions on larger data. 

We note that this may not have been an issue for prior work because (1) their experiments were limited to lower-dimensional or synthetic datasets, and (2) they did not measure the actual sparsity of their models at any $\lambda$.

To address this issue, we develop an efficient and scalable solver based on iterative proximal quasi-Newton steps, which we detail in Section~\ref{sec:method}.
Using this solver, our method \newowa successfully converges to the true objective as well as the right sparsity (Fig.~\ref{fig:convergence}).

\textbf{Divergence of the CSL objective.} As the data size increases, so often will the number of distributed partitions to facilitate the use of larger computing systems. 
If $d$ or $p$ grow faster than $N$, this often results in a local sample size $n$ which is comparable to $d$ or smaller. 
This causes the curvature of the local objective $\mathcal{L}_k$ to decrease.
In particular the local Hessian may become poorly conditioned or not positive definite at all. 
Furthermore, the affine term of the CSL objective grows with $\lVert \nabla \mathcal{L}(\hat{w}) - \nabla \mathcal{L}_k(\hat{w}) \rVert$,
which may also increase when $N$ and $n$ diverge.
In fact, much of the existing convergence theory relies on upper bounds of the above term.

Under such conditions, the optimum of the CSL objective $\hat{w}^{(1)}$ may be enormous, leading to a diverging update
\begin{equation}\lVert \hat{w}^{(1)} - w^* \rVert \gg \lVert \hat{w} - w^* \rVert \end{equation}

To lessen this effect, we apply damping to the local second-order information, which increases convexity and improves conditioning.
This is equivalent to adding an additional proximal regularization term $\frac{\alpha}{2} \lVert w - \hat{w} \rVert_2^2$.
We note that this term has also been proposed in \cite{shamir2014communication,fan2023communication}.
However, using our solver, we are able to propose an adaptive criterion for increasing $\alpha$ when divergence occurs. Thus $\alpha$ is used and adjusted only when necessary.
In contrast, prior methods need to tune $\alpha$ for each optimization,
an expensive task for large datasets. Altogether the full objective for finding $\hat{w}^{(t+1)}$ is
\begin{equation}
    \proxL(w) \coloneqq \mathcal{L}_k(w) + \Big(  \nabla \mathcal{L}(\hat{w}^{(t)}) - \nabla \mathcal{L}_k(\hat{w}^{(t)}) \Big)^\top w \\
    + \frac{\alpha}{2} \lVert w - \hat{w}^{(t)} \rVert_2^2 \label{eq:full_csl}
\end{equation}

Refer to Fig~\ref{fig:divergence} for demonstrative examples of the CSL objective diverging and the effect of $\alpha$ in fixing it.

\begin{figure}[t]
    \subfigure[{\it amazon7}, 128 partitions, $\lambda=0.001$.]{
        \begin{tikzpicture}[scale=0.8]

\definecolor{darkgray176}{RGB}{176,176,176}
\definecolor{darkorange25512714}{RGB}{255,127,14}
\definecolor{steelblue31119180}{RGB}{31,119,180}

\begin{axis}[
width=0.32\textwidth,
height=0.27\textwidth,
legend cell align={left},
legend style={fill opacity=0.8, draw opacity=1, text opacity=1, draw=none},
tick align=outside,
tick pos=left,
x grid style={darkgray176},
xlabel={Iterations},
xmin=-0.5, xmax=10.5,
xtick style={color=black},
y grid style={darkgray176},
ylabel={Objective error (\%)},
ymin=-0.0288308091609998, ymax=0.569228980936544,
ytick style={color=black},
major tick length=0.1,
]
\path [fill=darkorange25512714, fill opacity=0.2]
(axis cs:0,0.541502138446953)
--(axis cs:0,0.374574672985446)
--(axis cs:1,0.0230251581476573)
--(axis cs:2,0.00739977016650742)
--(axis cs:3,0.0011568678741418)
--(axis cs:4,0.000368235095538498)
--(axis cs:5,0.000534681314206345)
--(axis cs:6,0.0005170076176228)
--(axis cs:7,-0.00164627324747508)
--(axis cs:8,0.000476047477802161)
--(axis cs:9,0.000239472030875886)
--(axis cs:10,0.00066746160540338)
--(axis cs:10,0.00306816416215274)
--(axis cs:10,0.00306816416215274)
--(axis cs:9,0.0018907054839171)
--(axis cs:8,0.00317696134211534)
--(axis cs:7,0.00740489778694148)
--(axis cs:6,0.0076977964462379)
--(axis cs:5,0.0033795756914876)
--(axis cs:4,0.00451063059609937)
--(axis cs:3,0.00332677620311623)
--(axis cs:2,0.0109612882235786)
--(axis cs:1,0.0396922695625135)
--(axis cs:0,0.541502138446953)
--cycle;

\path [fill=steelblue31119180, fill opacity=0.2]
(axis cs:0,0.542044445023019)
--(axis cs:0,0.375058253816028)
--(axis cs:1,0.0221775899810661)
--(axis cs:2,0.00362676916810539)
--(axis cs:3,0.0013091910364534)
--(axis cs:4,0.000392442111276676)
--(axis cs:5,0.000295250805779172)
--(axis cs:6,0.000210642852603842)
--(axis cs:7,0.00022430459214971)
--(axis cs:8,0.000187846626028752)
--(axis cs:9,0.000219490709938619)
--(axis cs:10,0.000189789940979605)
--(axis cs:10,0.00362979604717084)
--(axis cs:10,0.00362979604717084)
--(axis cs:9,0.00137758764183202)
--(axis cs:8,0.0036317393621217)
--(axis cs:7,0.00138617581571971)
--(axis cs:6,0.00364914930384297)
--(axis cs:5,0.00143584810697915)
--(axis cs:4,0.00384260761593585)
--(axis cs:3,0.00219991398540886)
--(axis cs:2,0.00960552755341685)
--(axis cs:1,0.0431842376126508)
--(axis cs:0,0.542044445023019)
--cycle;

\addplot [semithick, darkorange25512714, mark=*, mark size=1, mark options={solid}]
table {%
0 0.4580384057162
1 0.0313587138550854
2 0.00918052919504299
3 0.00224182203862902
4 0.00243943284581894
5 0.00195712850284697
6 0.00410740203193035
7 0.0028793122697332
8 0.00182650440995875
9 0.00106508875739649
10 0.00186781288377806
};
\addlegendentry{\small sCSL}
\addplot [semithick, steelblue31119180, mark=*, mark size=1, mark options={solid}]
table {%
0 0.458551349419524
1 0.0326809137968585
2 0.00661614836076112
3 0.00175455251093113
4 0.00211752486360626
5 0.000865549456379159
6 0.0019298960782234
7 0.000805240203934713
8 0.00190979299407522
9 0.000798539175885319
10 0.00190979299407522
};
\addlegendentry{\small proxCSL}
\end{axis}

\end{tikzpicture}
        \begin{tikzpicture}[scale=0.8]

\definecolor{darkgray176}{RGB}{176,176,176}
\definecolor{darkorange25512714}{RGB}{255,127,14}
\definecolor{steelblue31119180}{RGB}{31,119,180}

\begin{axis}[
width=0.32\textwidth,
height=0.27\textwidth,
legend cell align={left},
legend style={fill opacity=0.8, draw opacity=1, text opacity=1, draw=none},
tick align=outside,
tick pos=left,
x grid style={darkgray176},
xlabel={Iterations},
xmin=-0.5, xmax=10.5,
xtick style={color=black},
y grid style={darkgray176},
ylabel={Number of non-zeros},
ymin=50, ymax=710,
ytick style={color=black},
major tick length=0.1,
]
\path [fill=darkorange25512714, fill opacity=0.2]
(axis cs:0,1004.49031955384)
--(axis cs:0,109.109680446157)
--(axis cs:1,149.971315342814)
--(axis cs:2,155.173927820838)
--(axis cs:3,136.89040868653)
--(axis cs:4,133.490003996803)
--(axis cs:5,136.248340530061)
--(axis cs:6,128.450033829515)
--(axis cs:7,111.757581925347)
--(axis cs:8,134.725240901843)
--(axis cs:9,138.80405516516)
--(axis cs:10,143.139977426665)
--(axis cs:10,160.860022573335)
--(axis cs:10,160.860022573335)
--(axis cs:9,153.59594483484)
--(axis cs:8,158.074759098157)
--(axis cs:7,207.042418074653)
--(axis cs:6,202.349966170485)
--(axis cs:5,182.951659469939)
--(axis cs:4,158.509996003197)
--(axis cs:3,156.70959131347)
--(axis cs:2,201.226072179162)
--(axis cs:1,193.228684657186)
--(axis cs:0,1004.49031955384)
--cycle;

\path [fill=steelblue31119180, fill opacity=0.2]
(axis cs:0,1004.49031955384)
--(axis cs:0,109.109680446157)
--(axis cs:1,88.0517719007585)
--(axis cs:2,132.550510257217)
--(axis cs:3,129.7750776405)
--(axis cs:4,133.052277442495)
--(axis cs:5,129)
--(axis cs:6,131.267949192431)
--(axis cs:7,130.150609846808)
--(axis cs:8,131.10455488499)
--(axis cs:9,129.876461593833)
--(axis cs:10,130.775255128608)
--(axis cs:10,133.224744871392)
--(axis cs:10,133.224744871392)
--(axis cs:9,133.723538406167)
--(axis cs:8,133.29544511501)
--(axis cs:7,134.249390153192)
--(axis cs:6,134.732050807569)
--(axis cs:5,133)
--(axis cs:4,134.147722557505)
--(axis cs:3,137.8249223595)
--(axis cs:2,137.449489742783)
--(axis cs:1,100.748228099242)
--(axis cs:0,1004.49031955384)
--cycle;

\addplot [semithick, black]
table {%
-0.5 132.2
10.5 132.2
};
\addlegendentry{\small Full data}
\addplot [semithick, darkorange25512714, mark=*, mark size=1, mark options={solid}]
table {%
0 556.8
1 171.6
2 178.2
3 146.8
4 146
5 159.6
6 165.4
7 159.4
8 146.4
9 146.2
10 152
};
\addlegendentry{\small sCSL}
\addplot [semithick, steelblue31119180, mark=*, mark size=1, mark options={solid}]
table {%
0 556.8
1 94.4
2 135
3 133.8
4 133.6
5 131
6 133
7 132.2
8 132.2
9 131.8
10 132
};
\addlegendentry{\small proxCSL}
\end{axis}

\end{tikzpicture}
    }
    
    \subfigure[{\it ember-100k}, 128 partitions, $\lambda=0.0001$.]{
        \begin{tikzpicture}[scale=0.8]

\definecolor{darkgray176}{RGB}{176,176,176}
\definecolor{darkorange25512714}{RGB}{255,127,14}
\definecolor{steelblue31119180}{RGB}{31,119,180}

\begin{axis}[
width=0.32\textwidth,
height=0.27\textwidth,
legend cell align={left},
legend style={fill opacity=0.8, draw opacity=1, text opacity=1, draw=none},
tick align=outside,
tick pos=left,
x grid style={darkgray176},
xlabel={Iterations},
xmin=-0.5, xmax=10.5,
xtick style={color=black},
y grid style={darkgray176},
ylabel={Objective error (\%)},
ymin=-0.105865161531867, ymax=2.22300112451379,
ytick style={color=black},
major tick length=0.1,
]
\path [fill=darkorange25512714, fill opacity=0.2]
(axis cs:0,2.09036650739912)
--(axis cs:0,0.333210120666978)
--(axis cs:1,0.0190269991305073)
--(axis cs:2,0.0279581413287873)
--(axis cs:3,0.0268109950567124)
--(axis cs:4,0.00513546080619732)
--(axis cs:5,0.00465233427858078)
--(axis cs:6,0.017210322575036)
--(axis cs:7,0.0016650975368664)
--(axis cs:8,0.0123254140666453)
--(axis cs:9,0.00188613891965651)
--(axis cs:10,0.00474450683962212)
--(axis cs:10,0.0295214499139434)
--(axis cs:10,0.0295214499139434)
--(axis cs:9,0.032901136309666)
--(axis cs:8,0.0372726723878904)
--(axis cs:7,0.0162475586190996)
--(axis cs:6,0.0431330576386277)
--(axis cs:5,0.0408411358241878)
--(axis cs:4,0.0356477435391453)
--(axis cs:3,0.0424009324033548)
--(axis cs:2,0.0502342635210075)
--(axis cs:1,0.0470892165573454)
--(axis cs:0,2.09036650739912)
--cycle;

\path [fill=steelblue31119180, fill opacity=0.2]
(axis cs:0,2.11714356605717)
--(axis cs:0,0.344761969135152)
--(axis cs:1,0.0793016673773421)
--(axis cs:2,0.00545456497751852)
--(axis cs:3,0.000444820165244347)
--(axis cs:4,0.000106838929061677)
--(axis cs:5,3.95011603035702e-05)
--(axis cs:6,1.20248205815504e-05)
--(axis cs:7,4.46351986058455e-06)
--(axis cs:8,-1.25170477095639e-06)
--(axis cs:9,-4.30529970675699e-06)
--(axis cs:10,-7.60307524662291e-06)
--(axis cs:10,0.000146021547539847)
--(axis cs:10,0.000146021547539847)
--(axis cs:9,0.000148910072437754)
--(axis cs:8,0.000153589353049019)
--(axis cs:7,0.000159473441738119)
--(axis cs:6,0.000169697754775568)
--(axis cs:5,0.000219550170524549)
--(axis cs:4,0.000376465792632208)
--(axis cs:3,0.00183173839582222)
--(axis cs:2,0.0208062803804365)
--(axis cs:1,0.295152004965255)
--(axis cs:0,2.11714356605717)
--cycle;

\addplot [semithick, darkorange25512714, mark=*, mark size=1, mark options={solid}]
table {%
0 1.21178831403305
1 0.0330581078439264
2 0.0390962024248974
3 0.0346059637300336
4 0.0203916021726713
5 0.0227467350513843
6 0.0301716901068319
7 0.00895632807798301
8 0.0247990432272679
9 0.0173936376146613
10 0.0171329783767828
};
\addplot [semithick, steelblue31119180, mark=*, mark size=1, mark options={solid}]
table {%
0 1.23095276759616
1 0.187226836171299
2 0.0131304226789775
3 0.00113827928053328
4 0.000241652360846942
5 0.00012952566541406
6 9.08612876785592e-05
7 8.1968480799352e-05
8 7.61688241390312e-05
9 7.23023863654983e-05
10 6.92092361466119e-05
};
\end{axis}

\end{tikzpicture}
        \begin{tikzpicture}[scale=0.8]

\definecolor{darkgray176}{RGB}{176,176,176}
\definecolor{darkorange25512714}{RGB}{255,127,14}
\definecolor{steelblue31119180}{RGB}{31,119,180}

\begin{axis}[
width=0.32\textwidth,
height=0.27\textwidth,
legend cell align={left},
legend style={fill opacity=0.8, draw opacity=1, text opacity=1, draw=none},
tick align=outside,
tick pos=left,
x grid style={darkgray176},
xlabel={Iterations},
xmin=-0.5, xmax=10.5,
xtick style={color=black},
y grid style={darkgray176},
ylabel={Number of non-zeros},
ymin=-326.694439976692, ymax=5100,
ytick style={color=black},
major tick length=0.1,
]
\path [fill=darkorange25512714, fill opacity=0.2]
(axis cs:0,2512.12162719103)
--(axis cs:0,169.678372808972)
--(axis cs:1,2036.34033485486)
--(axis cs:2,2437.30221989156)
--(axis cs:3,2037.16104713518)
--(axis cs:4,768.648511043885)
--(axis cs:5,485.924475988209)
--(axis cs:6,1651.94975821868)
--(axis cs:7,766.731587592627)
--(axis cs:8,946.51474648415)
--(axis cs:9,873.360277957637)
--(axis cs:10,1046.51513038929)
--(axis cs:10,2513.08486961071)
--(axis cs:10,2513.08486961071)
--(axis cs:9,2850.63972204236)
--(axis cs:8,4255.08525351585)
--(axis cs:7,1795.26841240737)
--(axis cs:6,6935.65024178132)
--(axis cs:5,5246.07552401179)
--(axis cs:4,3842.95148895612)
--(axis cs:3,4097.63895286482)
--(axis cs:2,7297.89778010844)
--(axis cs:1,5164.85966514514)
--(axis cs:0,2512.12162719103)
--cycle;

\path [fill=steelblue31119180, fill opacity=0.2]
(axis cs:0,2512.12162719103)
--(axis cs:0,169.678372808972)
--(axis cs:1,36.3813800273622)
--(axis cs:2,168.926628921036)
--(axis cs:3,162.560079681591)
--(axis cs:4,156.583544840618)
--(axis cs:5,158.647284327991)
--(axis cs:6,143.72689535238)
--(axis cs:7,145.016675936228)
--(axis cs:8,142.572821275113)
--(axis cs:9,139.977719473711)
--(axis cs:10,139.001515238483)
--(axis cs:10,165.398484761517)
--(axis cs:10,165.398484761517)
--(axis cs:9,169.622280526289)
--(axis cs:8,169.827178724887)
--(axis cs:7,174.983324063772)
--(axis cs:6,173.07310464762)
--(axis cs:5,194.552715672009)
--(axis cs:4,189.416455159382)
--(axis cs:3,182.639920318409)
--(axis cs:2,203.073371078964)
--(axis cs:1,135.218619972638)
--(axis cs:0,2512.12162719103)
--cycle;

\addplot [semithick, black]
table {%
-0.5 230.2
10.5 230.2
};
\addplot [semithick, darkorange25512714, mark=*, mark size=1, mark options={solid}]
table {%
0 1340.9
1 3600.6
2 4867.6
3 3067.4
4 2305.8
5 2866
6 4293.8
7 1281
8 2600.8
9 1862
10 1779.8
};
\addplot [semithick, steelblue31119180, mark=*, mark size=1, mark options={solid}]
table {%
0 1340.9
1 85.8
2 186
3 172.6
4 173
5 176.6
6 158.4
7 160
8 156.2
9 154.8
10 152.2
};
\end{axis}

\end{tikzpicture}
    }
    \vspace*{-1.0em}
    \caption{Iterated CSL updates using a standard solver (sSCL) and our method (\newowa) quickly converge to the optimal objective value (as defined by fitting on the full data) when the solution is sparse. However, sCSL often fails to reach the correct level of sparsity of a full data fit. Meanwhile, our specialized solver used in \newowa attains the optimal sparsity.}
    \label{fig:convergence}
\end{figure}
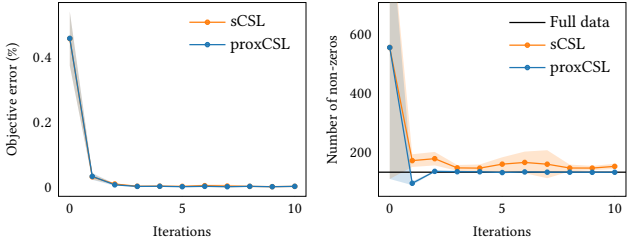
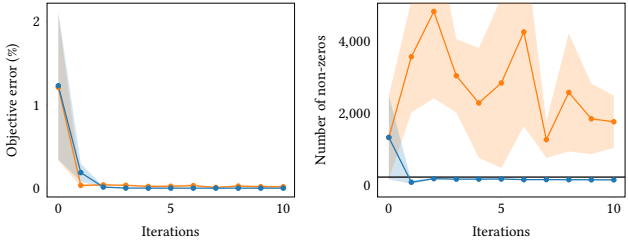

\begin{figure}[t]
    \begin{tikzpicture}[scale=0.8]

\definecolor{darkgray176}{RGB}{176,176,176}
\definecolor{darkorange25512714}{RGB}{255,127,14}
\definecolor{lightgray204}{RGB}{204,204,204}
\definecolor{steelblue31119180}{RGB}{31,119,180}

\begin{axis}[
width=0.31\textwidth,
height=0.27\textwidth,
legend cell align={left},
legend pos=north west,
legend style={fill opacity=0.8, draw opacity=1, text opacity=1, draw=none},
log basis x={2},
tick align=outside,
tick pos=left,
x grid style={darkgray176},
xlabel={Partitions},
xmin=8, xmax=512,
xmode=log,
xtick style={color=black},
y grid style={darkgray176},
ylabel={CSL Divergence},
ymin=-10, ymax=370,
ytick style={color=black},
major tick length=0.1,
]
\path [fill=steelblue31119180, fill opacity=0.2]
(axis cs:8,0.000592000000000023)
--(axis cs:8,1.40000000000001e-05)
--(axis cs:16,0.000649999999999998)
--(axis cs:32,0.000505999999999993)
--(axis cs:64,0.000464000000000006)
--(axis cs:128,0)
--(axis cs:256,0)
--(axis cs:512,0.0407558)
--(axis cs:512,0.2450392)
--(axis cs:512,0.2450392)
--(axis cs:256,0.577937)
--(axis cs:128,0.00362700000000001)
--(axis cs:64,0.0029)
--(axis cs:32,0.00158199999999999)
--(axis cs:16,0.00100800000000002)
--(axis cs:8,0.000592000000000023)
--cycle;

\path [fill=darkorange25512714, fill opacity=0.2]
(axis cs:8,0.410668)
--(axis cs:8,0)
--(axis cs:16,0.18927523)
--(axis cs:32,0)
--(axis cs:64,0.27836625)
--(axis cs:128,1.6100783)
--(axis cs:256,0)
--(axis cs:512,34.9148300000001)
--(axis cs:512,644.42977)
--(axis cs:512,644.42977)
--(axis cs:256,1264.472945)
--(axis cs:128,3.2891077)
--(axis cs:64,1.45374655)
--(axis cs:32,1.1498604)
--(axis cs:16,0.84757483)
--(axis cs:8,0.410668)
--cycle;

\addplot [semithick, darkorange25512714, mark=*, mark size=1, mark options={solid}]
table {%
8 0.1735338
16 0.51842503
32 0.5711898
64 0.8660564
128 2.449593
256 22.72744
512 339.6723
};
\addlegendentry{\small \newowa, $\alpha=0$}

\addplot [semithick, steelblue31119180, mark=*, mark size=1, mark options={solid}]
table {%
8 0.000303000000000012
16 0.00082900000000001
32 0.00104399999999999
64 0.001682
128 0.001322
256 0.008615
512 0.1428975
};
\addlegendentry{\small $\alpha=0.001$}

\end{axis}

\end{tikzpicture} \begin{tikzpicture}[scale=0.8]

\definecolor{darkgray176}{RGB}{176,176,176}
\definecolor{darkorange25512714}{RGB}{255,127,14}
\definecolor{lightgray204}{RGB}{204,204,204}
\definecolor{steelblue31119180}{RGB}{31,119,180}

\begin{axis}[
width=0.31\textwidth,
height=0.27\textwidth,
legend cell align={left},
legend style={fill opacity=0.8, draw opacity=1, text opacity=1, draw=lightgray204},
log basis x={10},
tick align=outside,
tick pos=left,
x grid style={darkgray176},
xlabel={Number of non-zeros},
xmin=100, xmax=250000,
xmode=log,
xtick style={color=black},
minor tick style={draw=none},
y grid style={darkgray176},
ylabel={CSL Divergence},
ymin=-400, ymax=5000,
ytick style={color=black},
major tick length=0.1,
]
\path [fill=steelblue31119180, fill opacity=0.2]
(axis cs:2.2,0.00706899967027164)
--(axis cs:2.2,0)
--(axis cs:10.6,0)
--(axis cs:21.8,0.00193977307697346)
--(axis cs:46.2,0.00681904680645467)
--(axis cs:114,0.00717181772110004)
--(axis cs:307,0.00596442515254595)
--(axis cs:2167,0)
--(axis cs:2983.4,0.000734186922829562)
--(axis cs:11250.2,0.00396807380295278)
--(axis cs:33308.8,0.00779418970249062)
--(axis cs:81675.4,0)
--(axis cs:156506.6,0)
--(axis cs:196935.2,0)
--(axis cs:225886.6,0.0258369852270302)
--(axis cs:232980,0)
--(axis cs:235873.4,0.0207554686548151)
--(axis cs:236702.2,0.117867314345616)
--(axis cs:236860,0.148241369418296)
--(axis cs:236894.4,0.249730194295086)
--(axis cs:236903,0)
--(axis cs:236903,0.63033606734143)
--(axis cs:236903,0.63033606734143)
--(axis cs:236894.4,1.09593512570491)
--(axis cs:236860,0.886230110581704)
--(axis cs:236702.2,0.982696205654384)
--(axis cs:235873.4,0.932016371345185)
--(axis cs:232980,0.716324636548476)
--(axis cs:225886.6,0.93614593477297)
--(axis cs:196935.2,0.652836986089475)
--(axis cs:156506.6,0.789024823191863)
--(axis cs:81675.4,0.417908489958952)
--(axis cs:33308.8,0.0175757302975094)
--(axis cs:11250.2,0.00808568619704721)
--(axis cs:2983.4,0.00377461307717043)
--(axis cs:2167,0.38164773412978)
--(axis cs:307,0.00845957484745405)
--(axis cs:114,0.0154861822789)
--(axis cs:46.2,0.0185857531935453)
--(axis cs:21.8,0.0163230269230265)
--(axis cs:10.6,0.0975749780692671)
--(axis cs:2.2,0.00706899967027164)
--cycle;

\path [fill=darkorange25512714, fill opacity=0.2]
(axis cs:2.2,0.00710663487982135)
--(axis cs:2.2,0)
--(axis cs:11,8.27608466569154e-05)
--(axis cs:22.2,0.000671206372289339)
--(axis cs:45.8,0)
--(axis cs:105.6,0.00191977707921834)
--(axis cs:235,0.00437307920812991)
--(axis cs:4422.6,0)
--(axis cs:16354.2,1.8268146363177)
--(axis cs:37128.4,8.91898855210643)
--(axis cs:56867.2,52.3559427578598)
--(axis cs:86796.6,0)
--(axis cs:149095.2,1024.52384124223)
--(axis cs:186214,627.531931856409)
--(axis cs:220547.4,1587.01325309902)
--(axis cs:221468.8,300.472587600618)
--(axis cs:225363.4,826.658094689013)
--(axis cs:227893.4,0)
--(axis cs:232616.6,2161.70389943609)
--(axis cs:235949.2,2598.9636580804)
--(axis cs:236296,3322.6408449515)
--(axis cs:236296,10103.7955550485)
--(axis cs:236296,10103.7955550485)
--(axis cs:235949.2,9008.08114191959)
--(axis cs:232616.6,9103.54490056391)
--(axis cs:227893.4,6116.83458869631)
--(axis cs:225363.4,8982.93630531099)
--(axis cs:221468.8,6873.46621239938)
--(axis cs:220547.4,8621.40634690098)
--(axis cs:186214,6721.43606814359)
--(axis cs:149095.2,7204.77375875777)
--(axis cs:86796.6,4323.05995607066)
--(axis cs:56867.2,241.29514924214)
--(axis cs:37128.4,17.0414230478936)
--(axis cs:16354.2,4.3234252036823)
--(axis cs:4422.6,230.32326539531)
--(axis cs:235,0.00786052079187007)
--(axis cs:105.6,0.00482982292078166)
--(axis cs:45.8,0.00508093142152021)
--(axis cs:22.2,0.00428999362771064)
--(axis cs:11,0.00502643915334312)
--(axis cs:2.2,0.00710663487982135)
--cycle;

\addplot [semithick, steelblue31119180, mark=*, mark size=1, mark options={solid}]
table {%
2.2 0.003278
10.6 0.0463218
21.8 0.0091314
46.2 0.0127024
114 0.011329
307 0.007212
2167 0.120386
2983.4 0.0022544
11250.2 0.00602688
33308.8 0.01268496
81675.4 0.161179074
156506.6 0.38994736
196935.2 0.31509628
225886.6 0.48099146
232980 0.31427768
235873.4 0.47638592
236702.2 0.55028176
236860 0.51723574
236894.4 0.67283266
236903 0.24296716
};
\addplot [semithick, darkorange25512714, mark=*, mark size=1, mark options={solid}]
table {%
2.2 0.00343199999999999
11 0.00255460000000002
22.2 0.00248059999999999
45.8 0.0024784
105.6 0.0033748
235 0.00611679999999999
4422.6 72.38985406
16354.2 3.07511992
37128.4 12.9802058
56867.2 146.825546
86796.6 1942.03406
149095.2 4114.6488
186214 3674.484
220547.4 5104.2098
221468.8 3586.9694
225363.4 4904.7972
227893.4 2539.5632
232616.6 5632.6244
235949.2 5803.5224
236296 6713.2182
};
\end{axis}

\end{tikzpicture}
    \vspace*{-1.0em}
    \caption{Divergence between CSL and true objective values after one \newowa update step, as a function of number of partitions (left) and intermediate solution sparsity (right). The divergence increases with decreasing sample size and increasing dimensionality, as expected. Setting the proximal parameter $\alpha > 0$ fixes the issue.}
    \label{fig:divergence}
\end{figure}
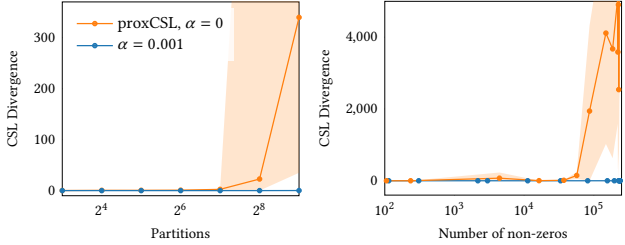

\textbf{Baselines.} For the remainder of our work we refer to two main CSL-type baselines for distributed updates: \textit{sCSL} and \textit{sDANE}.
These are sparse modifications of CSL and DANE with proximal regularization (Eq.~\ref{eq:full_csl}) which we implemented in OWL-QN.
These methods were presented and named CEASE in recent work \cite{fan2023communication}, but we use the above names to more clearly differentiate them and relate them to their predecessors.

While other global update methods have been recently proposed such as DiSCO \cite{zhang2018communication} and GIANT \cite{wang2018giant}, which solve approximate Newton problems with conjugate gradient, they do not produce sparse models so we do not compare against them.

\section{A proximal solver for sparse CSL} \label{sec:method}

In this section we will describe our algorithm \newowa, which solves the full CSL objective (\ref{eq:full_csl}) using iterative proximal Newton steps. 
\newowa converges to the global objective and true sparsity and automatically strengthens regularization when the CSL objective diverges. Our algorithms are summarized in Algo.~\ref{alg:newowa} and Algo.~\ref{alg:prox}.

\textbf{Proximal Newton.} Proximal Newton algorithms combine second-order updates with a proximal operator to handle the $L_1$ penalty. 
For composite objectives of the form $\min_w f(w) \coloneqq g(w) +  \lVert w \rVert_1$ where $g$ is convex and smooth,
the algorithm approximates $g(w)$ with a quadratic model $q_g(w)$ and iteratively minimizes $q_g(w) + \lVert w \rVert_1$.

This minimization is associated with a \textit{proximal operator}, namely $\mathrm{prox}_H(w) = \argmin_z \frac{1}{2} \lVert w - z\rVert_H^2 + \lVert w \rVert_1$, and has a closed form solution. For example, if $q_g$ could be solved with a quasi-Newton step this gives the result 
\begin{equation} w^{(t+1)} = \mathrm{prox}_H \Big( w^{(t)} - H^{-1}\nabla g(w^{(t)}) \Big). \end{equation}
Because of large $d$, explicitly computing $H$, let alone inverting it, is too costly.
Instead we solve the proximal minimization using coordinate descent, optimizing over one element of $w$ at a time. This strategy has been shown to be highly efficient for large problems in \cite{yuan2011improved,friedman2010regularization} and is used in LIBLINEAR \cite{fan2008liblinear}. See \cite{lee2014proximal} for detailed coverage and theory on proximal Newton algorithms.

For clarity, each step of the resulting algorithm first forms a quadratic approximation $q_g$ which we refer to as an \textit{outer step}.
This specific approximation is then solved using \textit{inner steps} of coordinate descent,
with each inner step involving a single pass over all the features. These steps are iterated until convergence or until an iteration limit is reached.
Following the techniques used in \cite{yuan2011improved},
our implementation avoids ever explicitly forming $H$ and achieves an inner step complexity of $\mathcal{O}(nd)$.

\textbf{Outer and inner steps.} In the context of CSL, the above procedure solves a single update $t$ of CSL, which itself can be iterated. Starting with initial estimate $\hat{w}^{(t)}$, each outer step $s$ then forms the CSL quadratic approximation
\begin{multline} \label{eq:quad_csl}
q_{\tilde{\mathcal{L}}}(w) =
\Big(     
    \nabla \mathcal{L}_k(\hat{w}^{(t, s)})
+ \nabla \mathcal{L}(\hat{w}^{(t)})
- \nabla \mathcal{L}_k(\hat{w}^{(t)})
\Big)^\top \delta \\
+ \frac{1}{2} \delta^\top H_{(k)}(\hat{w}^{(t)}) \delta + \frac{\alpha}{2} \lVert w - \hat{w}^{(t)}\rVert_2^2
\end{multline}
with $\delta \coloneqq w - \hat{w}^{(t,s)}$.

This minimization is over $d$ features.
The inner step then commences by minimizing $q_{\tilde{\mathcal{L}}}$ over one feature at a time.
This is a one-variable quadratic minimization so is computed exactly.
Because $L_1$ penalty is separable, the proximal step is applied simultaneously to each variable update via the following rule \cite{yuan2011improved}.
\begin{lem} \label{lem:prox_step}
    Given quadratic loss $\mathcal{L}$ and current iterate $w^{j-1}$ which has been updated to the $(j-1)$-th coordinate, suppose the $j$-th partial first and second derivatives are $G_j$ and $H_{jj}$ respectively. Then the problem
    $$ \min_z G_j z + \frac{1}{2} H_{jj} z^2 + |w^{j-1}_j + z|$$
    has solution
$$z =
\begin{cases}
 -\frac{G_j + 1}{H_{jj}} & \text{if $G_j + 1 \leq H_{jj} w_j^{j-1} $}  \\
 -\frac{G_j - 1}{H_{jj}} & \text{if $G_j - 1 \geq H_{jj} w_j^{j-1}$}  \\
 -w_j^{j-1} & \text{otherwise}
 \end{cases}
$$
\end{lem}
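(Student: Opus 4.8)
The plan is to treat this as a one-dimensional convex minimization and characterize the minimizer through first-order (subgradient) optimality. Writing $a \coloneqq w_j^{j-1}$ and $\phi(z) \coloneqq G_j z + \frac{1}{2} H_{jj} z^2 + |a + z|$, I would first observe that $H_{jj} > 0$ (it is a diagonal entry of the Hessian of the strictly convex quadratic model, and the proximal term of Eq.~\eqref{eq:quad_csl} contributes at least $\alpha$), so $\phi$ is strictly convex and coercive and therefore has a unique minimizer, namely the unique $z$ with $0 \in \partial \phi(z)$. This reduces the claim to solving the inclusion $0 \in \partial \phi(z)$.

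The subdifferential is $\partial \phi(z) = G_j + H_{jj} z + \partial |a+z|$, where $\partial |a+z| = \{+1\}$ when $a+z>0$, equals $\{-1\}$ when $a+z<0$, and equals the interval $[-1,1]$ at the kink $a+z=0$. I would split into these three cases. When $a+z>0$, stationarity reads $G_j + H_{jj} z + 1 = 0$, giving $z = -(G_j+1)/H_{jj}$; substituting back, the sign assumption $a+z>0$ is equivalent (using $H_{jj}>0$) to $G_j + 1 < H_{jj} a$. Symmetrically, assuming $a+z<0$ yields $z = -(G_j-1)/H_{jj}$ together with the consistency condition $G_j - 1 > H_{jj} a$. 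In the remaining kink case $z=-a$, optimality becomes $0 \in G_j - H_{jj} a + [-1,1]$, i.e.\ $G_j - 1 \le H_{jj} a \le G_j + 1$, which is exactly the complement of the two open regions above.

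Finally I would confirm that these three conditions tile the real line and reconcile them with the stated closed inequalities. At the boundary $G_j + 1 = H_{jj} a$ the first branch gives $z = -(G_j+1)/H_{jj} = -a$, and at $G_j - 1 = H_{jj} a$ the second branch gives $z = -(G_j-1)/H_{jj} = -a$, so on the shared boundaries all applicable formulas agree and the stated weak inequalities select the same unique minimizer; the assignment of each boundary to either adjacent case is therefore immaterial. The main point requiring care is precisely this handling of the non-differentiable kink at $z=-a$ through the subgradient interval, and the bookkeeping that the three regions are exhaustive and consistent where they meet. The algebra within each smooth branch is routine.
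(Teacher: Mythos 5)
Your proof is correct. Note that the paper itself gives no proof of this lemma---it is stated with a citation to Yuan et al.\ (the \emph{newGLMNET} paper), where the one-variable subproblem is resolved by exactly the same subgradient case analysis you carry out; your treatment of the three regions, the check that the weak inequalities agree at the shared boundaries (where all applicable formulas return $z=-a$), and the observation that $H_{jj}>0$ (guaranteed here by the proximal term contributing $\alpha$ to the diagonal) is what makes the minimizer unique, together constitute the standard and complete argument.
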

We point out that the CSL gradient and Hessian are computed once at the start of each outer step.
However, after each coordinate is updated, the local gradient for the next coordinate is affected by the previous update via a cross-term with the Hessian: $G_j = \nabla_j \mathcal{L}(\hat{w}^{(t)}) + (H_{(k)}(\hat{w}^{(t)}) \delta^{(cur)})_j$ where $\delta^{(cur)}$ is the current accumulated change to $\hat{w}^{(t,s)}$. This can be seen by expanding (\ref{eq:quad_csl}). 

In our experiments we set $S=10$ and $M=50$ max outer and inner steps, respectively (see Algo.~\ref{alg:prox}).

\textbf{Hessian caching}.
As the only vector that needs to be updated during inner steps is $H^{(k)}(w)\delta$, we note that this can be done without forming the Hessian.
The Hessian for logistic regression is
$H(w) = \frac{1}{n} X^\top D(w) X$
where $D(w)$ is diagonal with entries $d_{ii} = \pi_i ( 1-\pi_i)$,
$\pi_i$ being the predicted probability of sample $x_i$.
The main node stores $D$ as a length-$n$ vector.
Then the Hessian is implicitly updated by simply updating the vector $X\delta \in \mathbb{R}^n$ after each coordinate step. 
Each coordinate step $j$ adds $z$ to entry $j$ of $\delta$;
therefore, we add the $j$-th column of $H^{(k)}$ times $z$ to the $X\delta$ vector.

The diagonal of the Hessian is also cached as a length-$d$ vector for efficiency.

\begin{algorithm}[t]
\begin{algorithmic}[1] %
\STATE {\bf Input}: Partitioned data $\{\mathcal{D}_i\}_{i=1}^p$, regularization $\lambda$, $p$ processors, $k$ update iterations \\
\medskip
\STATE {\it // Initial distributed estimator (e.g. OWA).}
\FOR{$i \in [p]$ {\bf in parallel}}
  \STATE Solve Eq.~(\ref{eq:local_obj}) on $\mathcal{D}_i$ to get $\hat{w}_i$ using local optimizer 
\ENDFOR
\medskip
\STATE collect $W \gets [\hat{w}_1, \ldots, \hat{w}_p]$ on main node
\STATE solve Eq.~(\ref{eq:naive_avg}) or (\ref{eqn:cv_obj}) on main node to get $\hat{w}$ \\
\medskip

\STATE {\it // \newowa steps starting from $\hat{w}$.}
\STATE initialize $\hat{w}^{(0)} \gets \hat{w}$ \\
\FOR{$t \in [k]$}
    \FOR{$i \in [p]$ {\bf in parallel}}
        \STATE compute $\nabla \mathcal{L}_i(\hat{w}^{(t-1)})$ {\it // local gradient}
    \ENDFOR
    \medskip

    \STATE collect $\nabla \mathcal{L}(\hat{w}^{(t-1)}) \gets 1 / p (\sum_{i \in [p]} \nabla \mathcal{L}_i(\hat{w}^{(t-1)}))$ on main node
    \STATE obtain $\hat{w}^{(t)}$ using Algorithm~\ref{alg:prox}
\ENDFOR
\STATE {\bf return} $\hat{w}^{(k)}$
\end{algorithmic}
\caption{\newowa updates}
\label{alg:newowa}
\end{algorithm}

\begin{algorithm}[t]
\begin{algorithmic}[1] %
\STATE {\bf Input}: Local partition $\mathcal{D}_1$, regularization $\lambda$,  $\nabla \mathcal{L}(\hat{w}^{(t)})$ (global),  $\nabla \mathcal{L}_k(\hat{w}^{(t)})$ (local), max outer steps $S$, max inner steps $M$\\
\medskip
\FOR{$s \in [S]$ {\it // or until convergence}}
  \STATE compute local gradient $\nabla \mathcal{L}_k(\hat{w}^{(t,s)})$
  \STATE compute (implicitly) $H_{(k)}(\hat{w}^{(t,s)})$
  \STATE $\alpha \gets 0.0001$
  \WHILE{divergence check fails}
    \STATE $\alpha \gets 10\alpha$
  \ENDWHILE
  \STATE $\delta \gets 0$
  \FOR{$m \in [M]$ {\it // or until convergence}}
    \FOR{$j \in [d]$ {\it // one coordinate descent pass}}
        \STATE update $\delta_j$ with Eq.~(\ref{eq:quad_csl}) and Lemma~\ref{lem:prox_step}
    \ENDFOR
  \ENDFOR
  \STATE scale $\delta$ with linesearch with $\alpha$
\ENDFOR
\medskip
\STATE return $\hat{w}^{(t)} + \delta$
\medskip
\end{algorithmic}
\caption{Solving a single CSL update on main node}
\label{alg:prox}
\end{algorithm}

\textbf{Linesearch.} Each iteration of coordinate descent executes a pass over all $d$ features, updating the candidate update vector $\delta$ in place. We run $M=50$ iterations, unless convergence is reached earlier.

This is followed by a linesearch \cite{lee2014proximal}.
We replace the linesearch over the local objective with the full CSL objective (\ref{eq:full_csl}) which includes the step-size regularization. This helps prevent the diverging updates when using the unregularized CSL objective (\ref{eq:csl}).
We scale $\delta$ by $\{1, \beta, \beta^2, \ldots, \beta^{k_{max}}\}$.
For each $\beta^k \delta$, we evaluate the objective at point $\hat{w} + \beta^k \delta$, and we select $k$ and corresponding update vector which gives the lowest loss. 
We fix $\beta=0.5$ and $k_{max} = 20$.

\textbf{Adaptive tuning of $\alpha$.} Divergence of the CSL objective can be detected by sharp decrease (e.g. $20\%$) in the CSL objective (\ref{eq:full_csl})
but little change or even increase in the local objective (\ref{eq:local_obj}).
This is due to the affine term dominating the objective.
For our method we start $\alpha=0.0001$ and proceed.
If after 5 iterations of coordinate descent the above conditions are met, we scale $\alpha$ by 10 and restart.
This helps identify the minimal $\alpha$ at a relatively minor runtime cost.
Because $\alpha$ affects the objective itself, this check is only performed during the first outer step.

\section{Theoretical Results}

In order to establish the global convergence of \newowa,
we first start by establishing properties of the initial solution $\wowa$.

\begin{thm}[Thm. 4, \cite{izbicki2020distributed}]
\label{thm:owa_bound}
Given a dataset $\{ \X, \Y \}$,
parameters $\lambda$ and $p$,
the OWA (Optimal Weighted Average) technique of Izbicki and Shelton~\cite{izbicki2020distributed}
produces a solution $\wowa$ such that

\begin{equation}
\| \wowa - w^* \|_2 \le \O\left( \sqrt{\frac{\alpha_{\mathrm{hi}}}{\alpha_{\mathrm{lo}}} \cdot \frac{dt}{N}} \right)
\end{equation}

\noindent with probability $1 - e^{-t}$ for some $t > 0$,
where $w^*$ is the population risk minimizer.
\end{thm}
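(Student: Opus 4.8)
The plan is to bound $\|\hat{w}_{\mathrm{owa}} - w^*\|_2$ by exploiting the two structural facts that define OWA: the final estimate $\hat{w}_{\mathrm{owa}} = \hat{W}\hat{v}$ lies in the $p$-dimensional subspace $V = \mathrm{span}(\hat{w}_{(1)}, \ldots, \hat{w}_{(p)})$ spanned by the local solutions, and $\hat{v}$ minimizes the subsample-regularized risk (\ref{eqn:cv_obj}) over that subspace. The argument then proceeds by showing that $V$ contains a point very close to $w^*$, that the minimization over $V$ finds something at least as good on the subsample, and finally by converting an excess-risk bound into a parameter-distance bound via strong convexity.

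First I would establish concentration of each local minimizer. Since the local objective $\mathcal{L}_k$ is $\alpha_{\mathrm{lo}}$-strongly convex and $\alpha_{\mathrm{hi}}$-smooth in a neighborhood of $w^*$, standard M-estimation arguments give $\|\hat{w}_{(k)} - w^*\|_2 = \mathcal{O}(\sqrt{dt/n})$ with probability $1 - e^{-t}$, where $n = N/p$ is the local sample size. The key refinement is to decompose each local error into a common bias term and a mean-zero fluctuation, so that a suitable reweighting of the local solutions averages down the fluctuation by a factor of $p$. In particular the naive average $\frac{1}{p}\sum_k \hat{w}_{(k)} = \hat{W}(\tfrac{1}{p}\mathbf{1}) \in V$ already enjoys this variance reduction, and the adaptive weighting $\hat{v}$ can further cancel bias; the conclusion I want is that $V$ contains a point $\bar{w}$ with $\|\bar{w} - w^*\|_2^2 = \mathcal{O}(dt/N)$.

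Next I would bound the generalization from the subsample to the population over $V$. Because $\hat{v}$ minimizes the subsample risk (up to the vanishing ridge penalty $\lambda_{\mathrm{cv}}\|v\|_2$, which I would absorb as a lower-order term), $\hat{w}_{\mathrm{owa}}$ has subsample risk no larger than that of $\bar{w}$. Since $V$ has dimension only $p \ll d$, a uniform-convergence argument restricted to $V$ shows that the subsample risk over $n$ points tracks the population risk up to $\mathcal{O}(\sqrt{p/n})$, which is dominated by the $\mathcal{O}(\sqrt{d/N})$ scale; this is where the deviation parameter $t$ and the $1 - e^{-t}$ probability enter. Finally I would chain curvature inequalities: $\alpha_{\mathrm{lo}}$-strong convexity of the population risk gives $\tfrac{\alpha_{\mathrm{lo}}}{2}\|\hat{w}_{\mathrm{owa}} - w^*\|_2^2 \le \text{(excess risk of }\hat{w}_{\mathrm{owa}})$, while $\alpha_{\mathrm{hi}}$-smoothness bounds the excess risk of $\bar{w}$ by $\tfrac{\alpha_{\mathrm{hi}}}{2}\|\bar{w} - w^*\|_2^2 = \mathcal{O}(\tfrac{\alpha_{\mathrm{hi}}}{2} \cdot dt/N)$. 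Combining these yields $\|\hat{w}_{\mathrm{owa}} - w^*\|_2^2 \le \mathcal{O}\!\big(\tfrac{\alpha_{\mathrm{hi}}}{\alpha_{\mathrm{lo}}} \cdot \tfrac{dt}{N}\big)$, and the claim follows after taking square roots, with the condition number $\alpha_{\mathrm{hi}}/\alpha_{\mathrm{lo}}$ arising precisely from the smoothness-to-strong-convexity transfer.

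The main obstacle is the generalization step over the data-dependent subspace $V$: I must ensure that optimizing the merge weights does not reintroduce a penalty scaling with $d$ rather than $p$. The resolution is that once $V$ is fixed the merge problem is genuinely $p$-dimensional, so its statistical complexity is governed by $p$; the delicate points are handling the dependence between $V$ and the subsample (e.g. via an independent subsample or a covering argument over admissible subspaces) and verifying that the local solutions are close enough to $w^*$ that $V$ indeed contains the near-optimal point $\bar{w}$ with the claimed $\mathcal{O}(dt/N)$ error.
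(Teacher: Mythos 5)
You cannot be compared against the paper's proof here, because the paper does not prove this statement: it is imported verbatim as Theorem~4 of Izbicki and Shelton~\cite{izbicki2020distributed} and used as a black box to initialize the \newowa analysis. What you have written is therefore a blind reconstruction of the \emph{cited} proof. As a reconstruction, the architecture is the right one — (i) the estimate lives in the span $V$ of the local solutions, (ii) the second-stage merge is a genuinely $p$-dimensional ERM, and (iii) a smoothness/strong-convexity transfer produces exactly the condition-number factor $\alpha_{\mathrm{hi}}/\alpha_{\mathrm{lo}}$ — and this matches the shape of the Izbicki--Shelton argument.

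That said, the load-bearing step is asserted rather than proved. The claim that $V$ contains a point $\bar{w}$ with $\| \bar{w} - w^* \|_2^2 = \mathcal{O}(dt/N)$ is precisely the hard content of the theorem. For the logistic loss the local estimators are biased, and the uniform average $\hat{W}(\tfrac{1}{p}\mathbf{1})$ carries a second-order bias of order $d/n$ that does \emph{not} average away with $p$ — this is exactly the failure mode the present paper cites from~\cite{rosenblatt2016optimality} as the reason naive averaging degrades in high dimensions — so ``the adaptive weighting $\hat{v}$ can further cancel bias'' cannot be waved at; either the bias must be shown to be lower order under explicit local-sample-size conditions, or the span must be shown to contain a bias-corrected combination, and neither is done. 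Three smaller gaps: your domination claim $\sqrt{p/n} = \sqrt{p^2/N} \lesssim \sqrt{d/N}$ silently requires $p^2 \lesssim d$, so the second-stage estimation error is a genuinely separate term absent such a condition; the subsample in Eq.~(\ref{eqn:cv_obj}) is typically $\mathcal{D}_1$, which also produced $\hat{w}_{(1)}$, so $V$ and the second-stage sample are dependent — you flag this but leave it unresolved, and sample splitting (or a covering argument over the random subspace) is needed before the uniform-convergence step is legitimate; and union-bounding the $p$ local concentration events plus the merge event yields probability $1 - (p+1)e^{-t}$, so recovering the stated $1 - e^{-t}$ requires a reparametrization of $t$ that should be made explicit.
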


Here, $\alpha_{\mathrm{hi}}$ and $\alpha_{\mathrm{lo}}$ are the
maximum and minimum eigenvalues of
the Hessian of the loss at $w^*$.

Thus, once we have the initial solution $\wowa$,
we know that it is in the neighborhood of the true solution $w^*$
with high probability.
Once we have $\wowa$,
the next step of \newowa is to find
the proximal surrogate loss minimizer $\tilde{w}$ of Eq.~\ref{eq:full_csl}.

In our setting,
we choose to use {\it newGLMNET}~\cite{yuan2011improved},
although other optimization algorithms could also suffice
so long as they are guaranteed to converge to the exact solution of Eq.~\ref{eq:full_csl}.

We already know that {\it newGLMNET} converges to the exact solution of the logistic regression objective function (Eq.~\ref{eqn:obj}, Appendix A \cite{yuan2011improved}).
Using similar reasoning, we can establish that {\it newGLMNET} also converges for
the proximal surrogate loss minimizer.
Let $\proxL(w)$ be the proximal surrogate loss at iteration $t$ (Eq.~\ref{eq:full_csl}),
with minimizer $\tilde{w}$.

\begin{thm}
The newGLMNET optimizer,
on the proximal surrogate loss $\proxL(w)$ 
instead of the regular logistic loss (Eqn.~\ref{eqn:obj}),
produces an exact solution $\tilde{w}$.
\end{thm}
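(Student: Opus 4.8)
The plan is to reduce the claim to the already-established convergence of newGLMNET on the standard $L_1$-regularized logistic objective (Eqn.~\ref{eqn:obj}), by showing that $\proxL$ belongs to the same class of composite objectives and satisfies exactly the hypotheses that the proof in Appendix A of \cite{yuan2011improved} relies upon. First I would split $\proxL(w)$ into a smooth part and the nonsmooth penalty. Writing $b \coloneqq \nabla \mathcal{L}(\hat{w}^{(t)}) - \nabla \mathcal{L}_k(\hat{w}^{(t)})$, which is a fixed vector once $\hat{w}^{(t)}$ is frozen, and recalling that $\mathcal{L}_k$ already contains the $\lambda \lVert w \rVert_1$ term, I obtain
\begin{equation}
\proxL(w) = g(w) + \lambda \lVert w \rVert_1, \qquad g(w) \coloneqq \frac{1}{n}\sum_{(x_i,y_i)\in\mathcal{D}_k} \ell(y_i, x_i^\top w) + b^\top w + \frac{\alpha}{2}\lVert w - \hat{w}^{(t)} \rVert_2^2.
\end{equation}
Thus the smooth part $g$ differs from the plain local logistic loss only by the addition of a linear term $b^\top w$ and a strictly convex quadratic term.

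Second, I would verify that these two extra terms preserve every structural property of the objective invoked by the newGLMNET analysis. The linear term leaves the Hessian unchanged and merely shifts the gradient by the constant $b$; it is globally Lipschitz and convex. The quadratic term contributes $\alpha I$ to the Hessian, i.e. $\nabla^2 g(w) = H_{(k)}(w) + \alpha I$, so it preserves twice-continuous differentiability and convexity while strengthening it: for $\alpha > 0$ the function $g$ is $\alpha$-strongly convex even when the logistic Hessian $H_{(k)}$ is only positive semidefinite, which is precisely the degenerate high-dimensional regime that motivated the damping. Consequently $g$ is smooth and convex with Lipschitz gradient, and $\proxL$ is bounded below with a unique minimizer $\tilde{w}$.

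Third, I would observe that these are exactly the hypotheses under which \cite{yuan2011improved} establishes that newGLMNET's outer proximal-Newton iterates, each solved by coordinate descent with the one-variable rule of Lemma~\ref{lem:prox_step}, converge to the exact minimizer of a composite objective of the form $g + \lambda \lVert \cdot \rVert_1$. Since their proof uses only convexity, the smoothness and Lipschitz-gradient property, and the separability of the $L_1$ term---none of which are disturbed by the affine or proximal additions---the argument transfers essentially verbatim with $g$ in place of the bare logistic loss. In fact, because the $\alpha I$ term removes any need for the small positive-definiteness perturbation that newGLMNET otherwise adds to guarantee well-defined Newton directions, the required conditions are if anything easier to meet in our setting.

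The hard part will be confirming that no hypothesis in the cited convergence proof is genuinely specific to the logistic loss beyond the generic properties above---in particular that the analysis does not secretly depend on the exact form of the diagonal weighting $D(w)$ or on the loss entering only through the linear predictors $x_i^\top w$. The proximal term $\frac{\alpha}{2}\lVert w - \hat{w}^{(t)} \rVert_2^2$ is the one component that is not a function of the linear predictors alone, so I would check carefully that the coordinate-descent subproblem and its exact one-dimensional minimizer (Lemma~\ref{lem:prox_step}) remain valid after folding $\alpha$ into the diagonal Hessian entries $H_{jj}$ and into the gradient cross-terms $G_j$. Once that bookkeeping is verified, exactness of each inner step and convergence of the outer loop to $\tilde{w}$ follow as in the reference.
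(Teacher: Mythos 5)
Your overall strategy matches the paper's: both arguments reduce the claim to the convergence framework that underlies newGLMNET (the paper invokes \citet{tseng2009coordinate} explicitly, following Appendix A of \cite{yuan2011improved}), and both make the same two correct key observations --- that the affine correction $b^\top w$ leaves the Hessian untouched while the proximal term contributes $\alpha \I$, so that $\nabla^2 \proxL = \nabla^2 \L_1 + \alpha \I$ is positive definite for $\alpha > 0$ without the $\nu \I$ perturbation, and that the gradient of the smooth part remains Lipschitz (the paper verifies this concretely, taking $\Lambda = \| \X_1^T \| \| \X_1 \| + \alpha^2$ to guarantee finite termination of the outer line search).

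However, there is one genuine gap. In your third step you assert that ``exactness of each inner step and convergence of the outer loop to $\tilde{w}$ follow as in the reference,'' treating the transfer as verbatim once the one-dimensional coordinate rule of Lemma~\ref{lem:prox_step} is checked. This conflates exactness of each \emph{single-coordinate} minimization with exact solution of the full $d$-dimensional quadratic subproblem $q_k(\delta)$, which the Tseng--Yun framework requires. As the paper points out (citing \citet{friedman2010regularization}), newGLMNET's constant-step cyclic coordinate descent solves $q_k$ only \emph{inexactly} in any finite number of passes, so the reference does not hand you exact subproblem solutions for free. The paper repairs this by either pairing the inner solve with a line search or replacing its stopping rule with the adaptive condition of \citet{lee2014proximal} (their Eq.~(2.23)), and then invoking their Theorem~3.1 together with smoothness of $\proxL$ to conclude that the inner coordinate descent converges to the exact minimizer of $q_k(\delta)$. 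Without some such device, your argument does not close: each one-dimensional step being exact does not imply the accumulated $\delta$ exactly minimizes $q_k$, and hence does not yield an exact solution $\tilde{w}$ of $\proxL$. The loss-specificity worry you flag as ``the hard part'' is, by contrast, the easy part --- it is handled entirely by the Hessian-norm bound above --- while the inexact-inner-solve issue is the one place where the paper's proof genuinely departs from a verbatim transfer of \cite{yuan2011improved}.
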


\begin{proof}
{\it newGLMNET} is an optimizer that fits in the framework of~\citet{tseng2009coordinate},
and as with the regular logistic regression convergence proof for {\it newGLMNET}
(Appendix A,~\cite{yuan2011improved}),
it suffices to ensure the conditions required by the framework are satisfied.

Firstly, convergence requires that the Hessian (or its estimate) $H^k$
is positive definite.
When used to solve the (standard) logistic regression objective,
{\it newGLMNET} uses $H^k = \nabla^2 \L(w) + \nu \I$
for some small $\nu$,
and the positive-definiteness of $H^k$ is known.
However, in our case,
the addition of $\nu \I$ is not necessary.
As we are optimizing the proximal surrogate loss (Eq.~\ref{eq:full_csl}),
we instead take
\begin{eqnarray}
H^k &=& \nabla^2 \proxL(w) \\
 &=& \nabla^2 \L_1(w) - \nabla^2 (w^T (\nabla \L_1(w^{(t - 1)}) - \nabla \L(w^{(t - 1)}))) + \nonumber \\
 && \;\;\;\;\;\;\nabla^2 \left( \frac{\alpha}{2} \| w - w^{(t - 1)} \|_2^2 \right) \\
 &=& \nabla^2 \L_1(w) + \alpha \I
\end{eqnarray}
\noindent which is positive definite
as long as $\alpha > 0$.

Secondly, the descent direction subproblem
when using the proximal surrogate loss must be exactly solved:
\begin{equation}
\label{eqn:newglmnet_quadratic_approx}
q_k(\delta) \coloneqq \nabla \proxL(w_k)^T \delta + \frac{1}{2} \delta^T H_k \delta + \| w_k + \delta \|_1 - \| w_k \|_1,
\end{equation}
\noindent where $w_k$ is the solution that the optimizer has found
after outer step $k$,
and $H_k$ is either the Hessian $\nabla^2 \L(w_k)$
or an approximation thereof.
Our goal at this step is to find $\argmin_\delta q_k(\delta)$.
For the original formulation, see Eq. (13),~\cite{yuan2011improved}.

As specified in the paper~\cite{yuan2011improved},
{\it newGLMNET} uses a constant step size cyclic coordinate descent
to solve Eqn.~\ref{eqn:newglmnet_quadratic_approx}.
But, this will give an inexact solution,
as noted by~\citet{friedman2010regularization}.
This issue can be resolved either by
pairing the coordinate descent with a line search,
or by replacing the stopping condition for the inner coordinate descent solver
with the adaptive condition proposed by~\citet{lee2014proximal}
(Eq. (2.23), adapted here to our notation):
\begin{equation}
\label{eqn:adaptive_cd_stepsize}
\| \nabla \proxL(w_k)^T + (H^k + (H^k)^T) \delta_j \| \le \eta_j \| \nabla \proxL(w_k)^T \|
\end{equation}
\noindent where $j$ is the iteration number of the inner coordinate descent solver.

When using that adaptive stopping condition,
so long as the step size $\eta$ is under some threshold,
using Theorem~3.1 of~\citet{lee2014proximal}
and the fact that the proximal surrogate loss is smooth,
we obtain that the inner coordinate descent will converge to the exact minimizer
of the quadratic approximation $q_k(\delta)$.
In addition, the rate of convergence can be shown to be q-linear
or q-superlinear if $\eta$ decays to 0~\cite{lee2014proximal}.

The final condition for overall convergence is that
the outer line search terminates in a finite number of iterations;
for this, it is sufficient to show that
\begin{equation}
\label{eqn:finite_termination_condition}
\| \nabla \proxL(w_1) - \nabla \proxL(w_2) \| \le \Lambda \| w_1 - w_2 \|
\end{equation}
\noindent for all $w_1, w_2 \in \mathbb{R}^d$.
As with the regular logistic regression objective,
we may also observe that $\proxL(w)$ is twice differentiable,
and thus
\begin{equation}
\| \nabla \proxL(w_1) - \nabla \proxL(w_2) \| \le \| \nabla^2 \proxL(w_3) \| \| w_1 - w_2 \|,
\end{equation}
\noindent where $w_3$ is anywhere between $w_1$ and $w_2$.
Next, note that
\begin{equation}
\nabla^2 \proxL(w_3) = \nabla^2 \L_1(w_3) + \alpha \I
\end{equation}
\noindent which is bounded using the reasoning of~\citet{yuan2011improved}:
\begin{equation}
\| \nabla^2 \L_1(w_3) + \alpha \I \| \le \| \X_1^T \| \| \X_1 \| + \alpha^2.
\end{equation}
Therefore, taking $\Lambda = \| \X_1^T \| \| \X_1 \| + \alpha^2$,
newGLMNET with the adaptive stopping condition for the inner coordinate descent solver
converges to the exact minimizer of the surrogate loss $\proxL(w)$.
\end{proof}

Consider now the error of the overall \newowa algorithm:
$\| \tilde{w} - w^* \|_2$.
Recall that our interest is in the high-dimensional sparse regime
where $d$ may be (much) greater than $n$,
and we also expect the solution $w^*$ to be (potentially highly) sparse.
Considering logistic regression,
$\nabla^2 \proxL = \X^T D \X$
for some diagonal matrix $D$,
and if $d > n$,
$\nabla^2 \proxL$ is not positive definite.
This means that strong convexity is not satisfied,
and typical large-sample analysis techniques,
such as those used for CEASE~\cite{fan2023communication}
cannot be used.

However, although $\proxL(\cdot)$ is not strongly convex,
it can be strongly convex {\em along certain dimensions}
and {\em in a certain region}.
Therefore, following~\citet{negahban2012unified} (and \cite{jordan2018communication}),
we impose some common assumptions for analysis of sparse algorithms.

\begin{defn}[Restricted strong convexity \cite{negahban2012unified, jordan2018communication}]
The single-partition loss $\L_1$ satisfies the {\em restricted strong convexity} condition with parameter $\mu$ if
\begin{equation}
\L_1(w^* + \delta) - \L_1(w^*) - \delta^T (\nabla \L_1(w^*)) \ge \mu \| \delta \|_2^2
\end{equation}
\noindent where $S = \operatorname{supp}(w^*)$,
$\delta \in C(S) \coloneqq \{ v : \| v_S \|_1 \le 3 \| v_{S^C} \|_1 \}$,
and $\mu > 0$.
\end{defn}

Here, $S$ represents the set of dimensions that have nonzero values in the optimal solution $w^*$.
Intuitively,
under this condition,
$\L_1$ is strongly convex in the cone $C(S)$ centered at $w^*$,
where $C(S)$ contains any vector $\delta$
so long as $\delta$'s components are concentrated enough
in directions orthogonal to $w^*$.

\begin{defn}[Restricted Lipschitz Hessian]
A function $f(x)$ has {\em restricted Lipschitz Hessian} at radius $R$
if for all $\delta \in C(S)$ such that $\| \delta \|_2 < R$,
\begin{equation}
\| (\nabla^2 f(x + \delta) - \nabla^2 f(x)) \delta \|_{\infty} \le M \| \delta \|_2^2.
\end{equation}
\end{defn}

For our analysis, we assume that
(1) the data $(\X, \Y)$ has random design (i.i.d. sub-Gaussian),
(2) elements of $\X$ are bounded,
(3) $\L_1$ is restricted strongly convex,
(4) $\L_1$ has restricted Lipschitz Hessian,
and (5) $\proxL$ has restricted Lipschitz Hessian.

\begin{thm}
Under the assumptions above,
given $s \coloneqq | \operatorname{supp}(w^*) |$,
if
\begin{multline}
\lambda \ge 2 \| \nabla \L(w^*) \|_{\infty} +
2 \| \nabla^2 \L(w^*) - \nabla^2 \L_1(w^*) \|_{\infty} \| w^{(t - 1)} - w^* \|_1 \\
+ (4M + \alpha) \| w^{(t - 1)} - w^* \|^2_2
\end{multline}
\noindent then it follows that
\begin{equation}
\| \tilde{w} - w^* \|_2 \le \frac{3 \sqrt{s} \lambda}{\sqrt{u + \alpha/2}}.
\end{equation}
\label{thm:general_bound}
\end{thm}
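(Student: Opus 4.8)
The plan is to treat $\tilde w$ as the solution of a penalized $M$-estimation problem and apply the restricted strong convexity machinery of Negahban et al.~\cite{negahban2012unified}, specialized to the surrogate loss. Write $\proxL(w) = g(w) + \lambda \| w \|_1$, where $g$ gathers the smooth part of $\L_1$, the affine correction $(\nabla \L(w^{(t-1)}) - \nabla \L_1(w^{(t-1)}))^\top w$, and the proximal term $\tfrac{\alpha}{2}\| w - w^{(t-1)}\|_2^2$, and set $\delta \coloneqq \tilde w - w^*$. The argument proceeds in three stages: (i) control the dual norm $\| \nabla g(w^*)\|_\infty$ and show the hypothesis on $\lambda$ forces $\| \nabla g(w^*)\|_\infty \le \lambda/2$; (ii) combine this with the optimality of $\tilde w$ to prove the error lies in the cone, $\delta \in C(S)$; and (iii) invoke restricted strong convexity on $C(S)$ to turn the resulting $\ell_1$ inequality into the claimed $\ell_2$ bound.

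Stage (i) is the heart of the proof and is where the surrogate structure, rather than the true loss, must be handled. Differentiating the smooth parts at $w^*$ gives $\nabla g(w^*) = \nabla \L_1(w^*) + (\nabla \L(w^{(t-1)}) - \nabla \L_1(w^{(t-1)})) + \alpha(w^* - w^{(t-1)})$. Adding and subtracting $\nabla \L(w^*)$ and regrouping, I would write this as $\nabla \L(w^*) + R + \alpha(w^* - w^{(t-1)})$, where $R = [\nabla \L_1(w^*) - \nabla \L_1(w^{(t-1)})] - [\nabla \L(w^*) - \nabla \L(w^{(t-1)})]$ measures how the local/global gradient mismatch drifts between $w^{(t-1)}$ and $w^*$. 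A first-order Taylor expansion of each bracket about $w^*$ splits $R$ into a Hessian-mismatch term $(\nabla^2 \L(w^*) - \nabla^2 \L_1(w^*))(w^{(t-1)} - w^*)$, whose $\ell_\infty$ norm is at most $\| \nabla^2 \L(w^*) - \nabla^2 \L_1(w^*)\|_\infty \, \| w^{(t-1)} - w^*\|_1$ (via $\| A v\|_\infty \le \| A\|_{\max}\| v\|_1$), plus remainder terms that the restricted Lipschitz-Hessian assumptions (4)--(5) bound by a multiple of $M \| w^{(t-1)} - w^*\|_2^2$. Collecting these contributions together with $\| \nabla \L(w^*)\|_\infty$ and the proximal piece reproduces exactly the three summands in the hypothesized lower bound on $\lambda$, so that $\| \nabla g(w^*)\|_\infty \le \lambda/2$. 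Stage (ii) is then the standard basic inequality: from $g(\tilde w) + \lambda\| \tilde w\|_1 \le g(w^*) + \lambda\| w^*\|_1$, convexity of $g$ together with $\| \nabla g(w^*)\|_\infty \le \lambda/2$ yields $\| \delta_{S^C}\|_1 \le 3\| \delta_S\|_1$, i.e.\ $\delta \in C(S)$.

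Stage (iii) closes the argument. On $C(S)$ the smooth part $g$ inherits restricted strong convexity: the loss $\L_1$ contributes curvature $u$ by assumption, and the proximal term $\tfrac{\alpha}{2}\| \cdot - w^{(t-1)}\|_2^2$ adds $\tfrac{\alpha}{2}\| \delta\|_2^2$ exactly, so $g$ is restricted strongly convex with constant $u + \alpha/2$. Feeding this curvature and the cone estimate $\| \delta_S\|_1 \le \sqrt s\, \| \delta\|_2$ back into the basic inequality and solving the resulting scalar inequality for $\| \delta\|_2$ delivers the stated bound $\| \tilde w - w^*\|_2 \le 3\sqrt s\,\lambda / \sqrt{u + \alpha/2}$. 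The main obstacle is stage (i): unlike the classical analysis where one simply needs $\lambda \ge 2\| \nabla \L(w^*)\|_\infty$, here the surrogate gradient at $w^*$ does not vanish, and one must carefully expand $R$ and account for the proximal correction so that the Hessian-mismatch and Lipschitz-Hessian remainders land below $\lambda/2$. This is precisely what ties the admissible $\lambda$ to the distance $\| w^{(t-1)} - w^*\|$ of the previous iterate and makes assumptions (4)--(5) indispensable.
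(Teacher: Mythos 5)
Your proposal is correct and follows essentially the same route as the paper's proof: both arguments run the restricted-strong-convexity machinery of Negahban et al.\ on the surrogate objective, obtaining the curvature constant $\mu + \alpha/2$ from the exact identity for the proximal quadratic, and both certify $\lambda \ge 2\|\nabla(\text{smooth part})(w^*)\|_\infty$ by decomposing the surrogate gradient at $w^*$ into the true gradient, the Hessian-mismatch term $\| \nabla^2 \L(w^*) - \nabla^2 \L_1(w^*) \|_{\infty}\| w^{(t-1)} - w^* \|_1$, and Lipschitz-Hessian remainders plus the proximal correction. The only difference is presentational: where the paper cites Corollary 1 of Negahban et al.\ and Theorem 3.5 of Jordan et al.\ as black boxes, you unfold the cone argument and the Taylor expansion of the gradient mismatch explicitly, arriving at the same three summands in the condition on $\lambda$.
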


\begin{proof}
A similar result is derived as Theorem 3.5 in~\citet{jordan2018communication}
by using Corollary 1 of~\citet{negahban2012unified};
however, in our situation,
we must also consider the proximal penalty $(\alpha / 2) \| w - w^{(t - 1)} \|^2_2$.

Corollary 1 of~\citet{negahban2012unified}
requires that $\proxL(w)$ be restricted strongly convex.
Because $\L_1(w)$ is restricted strongly convex,
the restricted strong convexity of $\L(w)$ is established
by the proof of Theorem 3.5 in~\citet{jordan2018communication}.
By moving the proximal penalty terms (and proximal penalty gradient term)
to the right hand side,
we obtain
\begin{multline}
\proxL(w + \delta) - \proxL(w) - \delta^T (\nabla \proxL(w)) \ge \mu \| \delta \|_2^2 - \\
\;\;\left(\frac{\alpha}{2} \| w + \delta - w^{(t - 1)} \|_2^2 - \frac{\alpha}{2} \| w - w^{(t - 1)} \|_2^2 - \alpha \delta^T (w - w^{(t - 1)})\right)
\end{multline}
and it can be easily shown via distributivity that
\begin{equation}
\frac{\alpha}{2} \| w + \delta - w^{(t - 1)} \|_2^2 - \frac{\alpha}{2} \| w - w^{(t - 1)} \|_2^2 - \alpha \delta^T (w - w^{(t - 1)}) = \frac{\alpha}{2} \| \delta \|_2^2.
\end{equation}
\noindent and therefore $\proxL(w)$ is restricted strongly convex
with parameter $\mu + \alpha / 2$.
We also must show that $\proxL(w)$ has restricted Lipschitz Hessian.
\citet{jordan2018communication} show that $\proxL(w) - (\alpha / 2) \| w - w^{(t - 1)} \|_2^2$
(e.g., the proximal surrogate likelihood without the proximal term---or,
the surrogate likelihood)
has restricted Lipschitz Hessian.
The proximal penalty only adds a constant term to the Hessian:
\begin{equation}
\nabla^2((\alpha / 2) \| w - w^{(t - 1)} \|_2^2 = \alpha
\end{equation}
\noindent which does not change the result:
$\proxL(w)$ is restricted Lipschitz Hessian with parameter $M$.
From Corollary 1 of~\citet{negahban2012unified},
it is therefore true that
\begin{equation}
\| \tilde{w} - w^* \|_2 \le \frac{3 \sqrt{s} \lambda}{\sqrt{\mu + \alpha/2}}
\end{equation}
\noindent so long as $\lambda \ge 2 \| \nabla\proxL(w^*) \|_{\infty}$.
Via the triangle inequality,
\begin{multline}
\| \proxL(w^*) \|_{\infty} \le \| \L_1(w^*) - w^{*\top} (\nabla\L_1(w^{(t - 1)}) - \nabla\L(w^{(t - 1)})) \|_{\infty} \\
\;\;\;\;+ \frac{\alpha}{2} \| w^* - w^{(t - 1)} \|^2_2
\end{multline}
\noindent and using the results from~\citet{jordan2018communication}
to handle the left-hand side yields
\begin{multline}
\| \nabla\proxL(w^*) \|_{\infty} \le \| \nabla^2\L(w^*) - \nabla^2\L_1(w^*) \|_{\infty} \| w^* - w^{(t - 1)} \|_1 \\
+ \| \nabla\L(w^*) \|_{\infty} + \left(2M + \frac{\alpha}{2}\right) \| w^* - w^{(t - 1)} \|_2^2.
\end{multline}
and therefore the statement holds.
\end{proof}

Now, using the error bound for OWA (Theorem~\ref{thm:owa_bound})
plus the theorem above,
we can derive a bound for the error of OWAGS.

\begin{thm}
Under the assumptions above,
for some constants $c_1$, $c_2$, and $t$
that are independent of $n$, $k$, $d$, and $s$,
with probability $(1 - t)(1 - c_1 \operatorname{exp}(-c_2 n))$,
\begin{multline}
\| \tilde{w} - w^* \|_2 \le \\
\O\left( \sqrt{\frac{s \log d}{n}} \right) + \O\left( s \sqrt{\frac{d \log dt}{n} \frac{\alpha_{hi}}{\alpha_{lo}}} \right) + \O\left( \sqrt{s} \sqrt{\frac{dt}{n} \frac{\alpha_{hi}}{\alpha_{lo}}} \right).
\end{multline}
\end{thm}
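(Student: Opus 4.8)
The plan is to chain Theorem~\ref{thm:owa_bound} (the OWA initialization guarantee) with Theorem~\ref{thm:general_bound} (the single-step surrogate bound). Theorem~\ref{thm:general_bound} gives $\| \tilde{w} - w^* \|_2 \le 3\sqrt{s}\,\lambda / \sqrt{\mu + \alpha/2}$ whenever $\lambda$ dominates a three-part expression, so I would take $\lambda$ to be a constant multiple of the minimal admissible value and obtain, for a constant $C$ absorbing $1/\sqrt{\mu + \alpha/2}$,
\[
\| \tilde{w} - w^* \|_2 \le C\sqrt{s}\Big( \| \nabla\L(w^*) \|_\infty + \| \nabla^2\L(w^*) - \nabla^2\L_1(w^*) \|_\infty\, \| w^{(t-1)} - w^* \|_1 + (4M+\alpha)\, \| w^{(t-1)} - w^* \|_2^2 \Big).
\]
The strategy is then to bound each of the three summands separately: the purely statistical summand will give the first term of the stated bound, the Hessian-mismatch cross term will give the $s$-prefactored second term, and the quadratic proximity summand will give the third term.

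For the statistical summand, under the i.i.d.\ sub-Gaussian and bounded-design assumptions (1)--(2) a coordinatewise tail bound plus a union bound over the $d$ coordinates gives $\| \nabla\L(w^*) \|_\infty = \O(\sqrt{\log d / n})$ on an event of probability $1 - c_1\exp(-c_2 n)$; multiplying by $\sqrt{s}$ produces $\O(\sqrt{s \log d / n})$. For the remaining two summands I would substitute the initialization error: at the first update $w^{(t-1)} = \wowa$, so Theorem~\ref{thm:owa_bound} supplies $\| w^{(t-1)} - w^* \|_2 = \O(\sqrt{(\alpha_{hi}/\alpha_{lo})\, dt/n})$ with the probability stated there (which, after reparametrizing the confidence level, appears as the factor $1 - t$). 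Feeding this OWA rate directly into the quadratic proximity summand $\| w^{(t-1)} - w^* \|_2^2$ produces the third term $\O(\sqrt{s}\,\sqrt{(\alpha_{hi}/\alpha_{lo})\, dt/n})$.

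The cross term is where the analysis is most delicate, and I expect the $\ell_1$-to-$\ell_2$ conversion to be the main obstacle. I would control the entrywise Hessian discrepancy $\| \nabla^2\L(w^*) - \nabla^2\L_1(w^*) \|_\infty$ by comparing both the $N$-sample and the $n$-sample Hessians to the common population Hessian and reusing the same sub-Gaussian concentration machinery (this is what contributes the logarithmic factor appearing inside the second term). I would then convert $\| w^{(t-1)} - w^* \|_1$ into the OWA $\ell_2$ rate. The OWA guarantee is stated only in $\ell_2$, and the cheap inequality $\| \delta \|_1 \le 4\sqrt{s}\,\| \delta \|_2$ is available only when the deviation $\wowa - w^*$ lies in the restricted cone $C(S)$; establishing (or assuming, consistent with the restricted-strong-convexity setup) this cone membership is exactly what turns the leading $\sqrt{s}$ into the $s$ prefactor and yields $\O(s\sqrt{(\alpha_{hi}/\alpha_{lo})\, d\log(dt)/n})$. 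Absent the cone one would have to pay a $\sqrt{d}$ conversion factor instead, degrading the rate.

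Finally I would assemble the three terms and intersect the governing events: the OWA event and the gradient and Hessian concentration events (each of probability $1 - c_1\exp(-c_2 n)$). A union bound yields the stated joint probability $(1-t)(1 - c_1\exp(-c_2 n))$, and since $c_1, c_2$ are the concentration constants and $t$ is the OWA confidence parameter, none of them depend on $n$, $k$, $d$, or $s$, which is the independence asserted in the statement; note also that the bound is a single-step guarantee from the OWA initialization, so the iteration count $k$ does not enter.
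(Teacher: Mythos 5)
Your proposal is correct and follows essentially the same route as the paper: initialize with the OWA guarantee (Theorem~\ref{thm:owa_bound}), apply the one-step surrogate bound (Theorem~\ref{thm:general_bound}) with $\lambda$ taken at its minimal admissible value, and convert the $\ell_1$ initialization error to the $\ell_2$ rate via a $\sqrt{s}$ factor. The one genuine difference is that where you re-derive the probabilistic ingredients yourself---coordinatewise sub-Gaussian concentration plus a union bound over $d$ coordinates for $\| \nabla \L(w^*) \|_\infty$, and analogous entrywise concentration against the population Hessian for $\| \nabla^2 \L(w^*) - \nabla^2 \L_1(w^*) \|_\infty$---the paper imports all of this as a black box by citing Theorem 3.7 of \citet{jordan2018communication}, arguing only that the proximal penalty shifts the Hessian by the constant $\alpha$ and hence leaves that theorem's conclusions intact. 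Your version is thus more self-contained, and on one point it is actually more careful than the paper: the paper asserts $\| \wowa - w^* \|_1 \le \sqrt{s}\, \| \wowa - w^* \|_2$ as ``trivial,'' but as you correctly flag, this inequality fails for generic vectors in $\mathbb{R}^d$ (the generic conversion costs $\sqrt{d}$) and requires the deviation $\wowa - w^*$ to lie in the restricted cone $C(S)$ or to be effectively $s$-sparse; making that cone membership explicit, as you propose, fills in precisely the step the paper's ``trivial'' glosses over. Your remaining bookkeeping---the $s$ prefactor on the cross term arising as $\sqrt{s} \cdot \sqrt{s}$, the probability $(1 - t)(1 - c_1 \exp(-c_2 n))$ from intersecting the OWA and concentration events, and the observation that $k$ does not appear because this is a single-step guarantee from $w^{(0)} = \wowa$---matches the paper's substitution into the cited result.
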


\begin{proof}
The result is a straightforward combination of Theorem~\ref{thm:general_bound},
Theorem~\ref{thm:owa_bound},
and Theorem 3.7 from~\citet{jordan2018communication}.

Under the conditions of Theorem~\ref{thm:general_bound},
the exact statement of Theorem 3.7 of~\citet{jordan2018communication}
applies to the proximal surrogate $\proxL(w)$:
the proximal penalty applies only a constant shift to the Hessian $\nabla^2\proxL(w)$;
this does not affect any results from that result.

Next, from Theorem~\ref{thm:owa_bound},
we know that with probability $(1 - t)$,
\begin{equation}
\| \wowa - w^* \|_2 \le \O\left( \sqrt{\frac{\alpha_{hi}}{\alpha_{lo}} \frac{dt}{n}} \right)
\end{equation}
\noindent and it is trivial to establish a simple bound on the L1-norm:
\begin{align}
\| \wowa - w^* \|_1 &\le \sqrt{s}\ \| \wowa - w^* \|_2 \\
 &\le \O\left( \sqrt{\frac{\alpha_{hi}}{\alpha_{lo}} \frac{sdt}{n}} \right).
\end{align}
As we are using $w^{(0)} = \wowa$ and $w^{(1)} = \tilde{w}$,
the statement of the theorem follows
by substituting these terms into the statement of
Theorem 3.7 of~\citet{jordan2018communication}.
\end{proof}

Note that the bound does not depend on the number of partitions $p$.
Constants, including $\mu$ and $M$, have been omitted for simplicity;
but, as the problem $(\X, \Y)$ gets `easier'---e.g., more strongly convex---the
parameter $\mu$ increases and $M$ decreases,
the bound tightens
and the required $\lambda$ decreases.
This is an intuitive result.

\section{Results} \label{sec:results}

\begin{table}[t]
    \small
    \centering
    \begin{tabular}{lcccc}
        \toprule
        {\bf dataset} & $n$ & $d$ & nnz & size \\
        \midrule
        amazon7 & $1.3$M & $262$k & $133$M, $0.04$\% & $1.2$GB \\
        url & $2.3$M & $3.2$M & $277$M, $4\mathrm{e}{\scalebox{0.5}[1.0]{\( - \)}4}$  & $3.9$GB \\
        criteo & $45$M & $1$M & $1.78$B, $4\mathrm{e}{\scalebox{0.5}[1.0]{\( - \)}3}$\% & $25$GB \\
        ember-100k & $600$k & $100$k & $8.48$B, $10.6$\% & $61$GB \\
        ember-1M & $600$k & $1$M & $38.0$B, $4.7$\% & $257$GB \\
        \bottomrule
    \end{tabular}
    \vspace*{-0.8em}
    \caption{Datasets with uncompressed {\it libsvm}-format sizes.}
    \label{tab:datasets}
\end{table}

We run experiments to thoroughly assess the performance of \newowa relative to baselines:
\textbf{sCSL} and \textbf{sDANE},
the sparse variants of CSL~\cite{jordan2018communication} and DANE~\cite{shamir2014communication} respectively with the CEASE modifications discussed in \cite{fan2023communication};
and one-shot distributed estimators \textbf{Naive avg.} and \textbf{OWA}~\cite{izbicki2020distributed}.
For the smaller datasets we also run the serial algorithm \textit{newGLMNET} from LIBLINEAR~\cite{fan2008liblinear} to show how close \newowa can get to the full data solution.
We test our method on multiple high-dimensional datasets, with details in Table \ref{tab:datasets}.

\begin{figure}[t]
    \subfigure[{\it amazon7}, multi-core, 128 partitions.]{
        \input{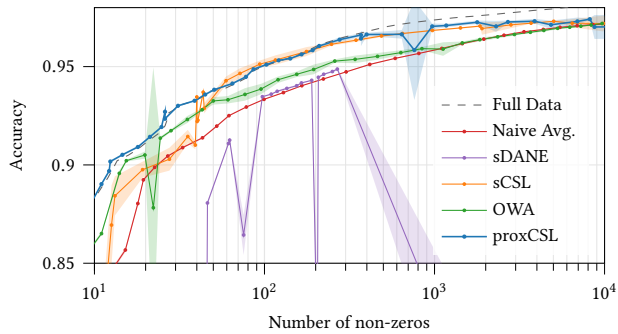}
    }
    
    \subfigure[{\it ember-100k}, multi-core, 128 partitions.]{
        \input{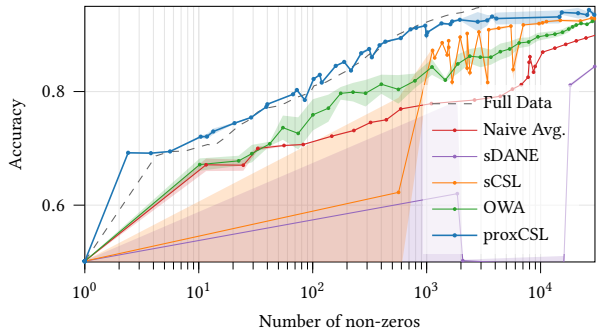}
    }
    
    \subfigure[{\it url}, multi-core, 128 partitions.]{
        \input{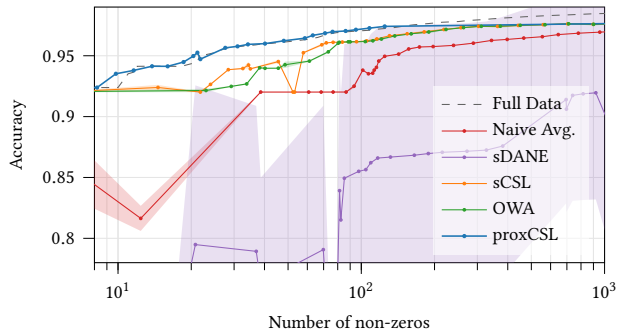}
    }
    \vspace*{-1.0em}
    \caption{Number of nonzeros vs. test set accuracy in the single-node multi-core setting over a grid of regularization values. The distributed methods (sDANE, sCSL, \newowa) are initialized with the OWA solution and updated twice. \newowa (blue) cleanly outperforms other distributed methods across the datasets, often matching the full data solution computed with LIBLINEAR (dashed grey). sCSL performs nearly as well as \newowa on \textit{amazon7} but not on other datasets. sDANE and sCSL fail to achieve sparse solutions on ember-100k even after the grid resolution was increased.}
    \label{fig:nnz-sweep-single-node}
\end{figure}

\begin{figure*}[t]
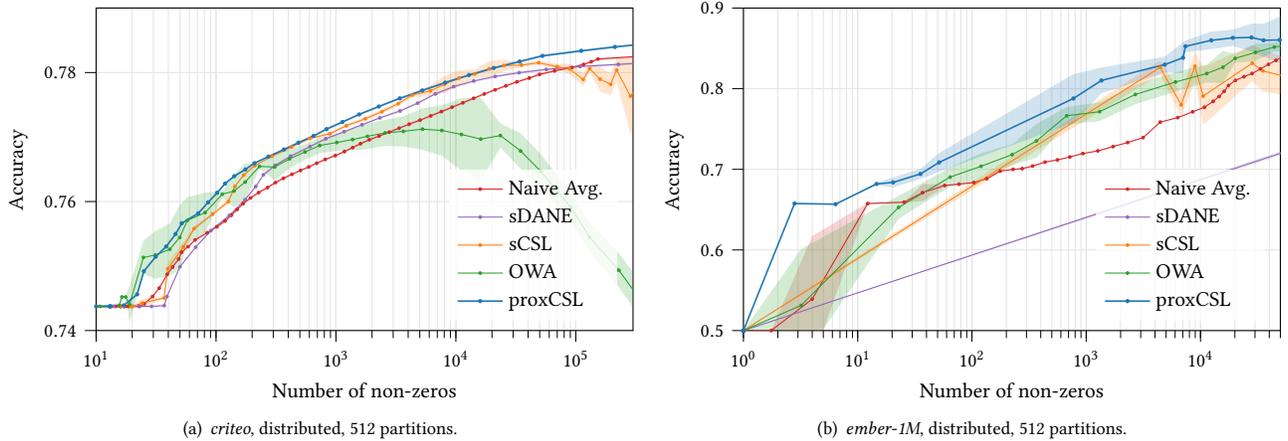

    \subfigure[{\it criteo}, distributed, 512 partitions.]{
        \input{fig/mpi_criteo_nnz_vs_test_acc.tex}
    }
    \subfigure[{\it ember-1M}, distributed, 512 partitions.]{
        \input{fig/mpi_ember1M_nnz_vs_test_acc.tex}
    }
    \vspace*{-1.0em}
    \caption{Number of nonzeros vs. test set accuracy in the distributed multi-node setting, after two update steps for the distributed methods (sDANE, sCSL, \newowa). On both datasets, \newowa (blue) outperforms the other methods across all sparsity levels. Due to the massive data size, no full data solution is computed. On criteo, OWA diverges at low regularizations, so we initialize the distributed methods with Naive Avg. instead. sDANE and sCSL fail to achieve sparse solutions on ember-1M even after the grid resolution was increased.
    }
    \label{fig:nnz-sweep-distributed}
\end{figure*}

The datasets were obtained from the LIBSVM website \cite{chang2011libsvm},
with the exception of \textit{ember-100k} and \textit{ember-1M} which we built from the malware and benign files in the Ember2018 dataset \cite{anderson2018ember} using the KiloGrams algorithm~\cite{Kilograms_2019,hashgram_2018,raff_hash_gram_parallel}.
This consists of computing 8-byte n-grams over the files in the dataset,
and subsetting to the most frequent 100k or 1M n-grams~\cite{raff_ngram_2016,Zak2017}.

For each dataset, we sample a random 80/20 train-test split.
We split the training data across varying partitions,
depending on the experiment,
and train the methods.
We repeat this process across a grid of 80 logarithmically-spaced $\lambda$ values.
For each $\lambda$, we replicate the distributed estimation 5 times and record the average number of nonzeros in each solution and average test set accuracy (along with standard deviations).
This gives a good comparison of the methods across varying sparsity levels.

We implemented all methods in C++ with the Armadillo linear algebra library \cite{sanderson2016armadillo} and mlpack machine learning library \cite{curtin2023mlpack},
with OpenMP and MPI to distribute the computations.
For sDANE and sCSL we also use the OWL-QN implementation of libLBFGS\footnote{\url{https://github.com/chokkan/liblbfgs}}
We study two distributed settings:
{\it (1)} single-node multicore, and
{\it (2)} fully distributed.
The first setting is relevant in modern servers with high numbers of cores available.
In our case, we used a powerful server with 256 cores and 4TB of RAM
for our single-node experiments.
The communication costs are lower in this setting because network latency is avoided,
but the fundamental alorithm works the same way.
The second setting is even larger in scale, when multiple machines are connected.
Here we use a cluster with 16 nodes, using up to 32 cores and 1TB of RAM on each node.

While the methods can be run for many updates, due to the goal of limiting communication,
we find that 2-4 iterations are sufficient to update the solution 
with diminishing return after (see Fig.~\ref{fig:convergence}).
Unless otherwise stated we initialize \newowa, sCSL, and sDANE with the OWA solution and compare them after 2 updates.

For additional hyperparameter and computational detail, refer to Appendix~\ref{app:detail}.

\subsection{Test accuracy across sparsity levels}
Experimental results for the single-node setting are shown over a range of $\lambda$ in Fig. \ref{fig:nnz-sweep-single-node}. 
Across a range of datasets and sparsities,
\newowa is able to converge to the full data solution after two updates.
When this occurs, \newowa often significantly outperforms the baselines sCSL and sDANE,
as well as the initial estimators OWA and the naive average.
On the smallest dataset (\textit{amazon7}) only we see that the sCSL with OWL-QN solver approaches \newowa in performance.
On the other hand, DANE generally fares significantly worse.
We find that due to averaging update models across all partitions,
sparsity is often lost.
In addition, the optimization sometimes fails to converge on one or more partitions.

Fig.~\ref{fig:nnz-sweep-distributed} shows similar experiments on the fully distributed setting, which we apply to two of our largest datasets (\textit{criteo} and \textit{ember-1M}).
As before, \newowa consistently achieves better test accuracy across most sparsity levels than the other methods. 

Our method can generalize to other loss functions provided the theoretical assumptions are met. For example, we can use the elastic net-regularized logistic regression instead of the Lasso regularization with comparable performance (Appendix~\ref{app:enet}).

\begin{table}[!t]
    \centering
    \resizebox{\columnwidth}{!}{%
    \begin{tabular}{@{}lrrrrr@{}}
    \toprule
    \multicolumn{1}{c}{\textbf{dataset}} & \multicolumn{1}{c}{Naive Avg.} & \multicolumn{1}{c}{OWA} & \multicolumn{1}{c}{sCSL} & \multicolumn{1}{c}{sDANE} & \multicolumn{1}{c}{\newowa} \\ \midrule
   \multicolumn{4}{l}{\it single-node parallel} \\
    amazon7 & 2.356s & 14.933s & 17.678s & 27.489s & 16.756s \\
    ember-100k & 7.811s & 13.245s & 48.950s & 75.120s & 70.921s \\ 
    url & 17.085s & 218.092s & 98.635 & 105.161s & 91.425s \\
    \midrule
    \bottomrule
     & & & \\
    \toprule
    \multicolumn{1}{c}{\textbf{dataset}} & \multicolumn{1}{c}{Naive Avg.} & \multicolumn{1}{c}{OWA} & \multicolumn{1}{c}{sCSL} & \multicolumn{1}{c}{sDANE} & \multicolumn{1}{c}{\newowa} \\ \midrule
    \multicolumn{4}{l}{\it fully distributed} \\
    ember-1M & 6.176s & 5.836s & 81.634s & 68.975s & 43.002s \\
    criteo & 2.349s & 25.885s & 36.069s & 65.618s & 22.293s \\
    \bottomrule
    \end{tabular}%
    }
    \vspace*{-0.6em}
    \caption{Runtime results for different techniques. Although \newowa takes longer to converge than naive averaging and OWA, it provides significantly better performance (see Fig.~\ref{fig:nnz-sweep-single-node}). This also generally holds when comparing \newowa against sCSL and sDANE.
    For more detailed timing including comparison of various steps within \newowa refer to Appendix~\ref{app:more_timing}.}
    \label{tab:runtimes}
\end{table}

\subsection{Runtime comparison}

In the next set of experiments we also compare the runtimes of our method \newowa with the other methods.
For each method we identify the setting that results in a model with roughly 1000 non-zeros for comparability and record the runtime.
Note that update methods sDANE, sCSL, and \newowa also include the initialization time in the total.
Therefore their times will generally always be higher than OWA or Naive Avg.,
unless the underlying setting was at a significantly different value of $\lambda$.

As expected, our method is quite fast even on the largest datasets.
Runtimes are comparable with sCSL since OWL-QN is also known to be a fast solver.
Yet our method converges to better accuracy solutions given similar runtime.
In comparison sDANE is generally slower because the second optimization must be done on each partition and re-averaged, incurring additional communication and computational time.
For more details and analysis see Appendix~\ref{app:more_timing}.

\subsection{Convergence to a known model}

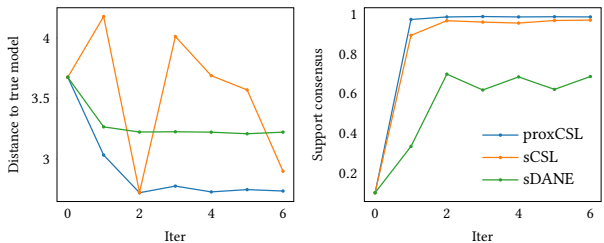
\begin{figure}[t]
\begin{tikzpicture}[scale=0.8]

\definecolor{darkgray176}{RGB}{176,176,176}
\definecolor{darkorange25512714}{RGB}{255,127,14}
\definecolor{forestgreen4416044}{RGB}{44,160,44}
\definecolor{lightgray204}{RGB}{204,204,204}
\definecolor{steelblue31119180}{RGB}{31,119,180}

\begin{axis}[
width=0.31\textwidth,
height=0.27\textwidth,
legend cell align={left},
legend style={fill opacity=0.8, draw opacity=1, text opacity=1, draw=lightgray204},
tick align=outside,
tick pos=left,
x grid style={darkgray176},
xlabel={Iter},
xmin=-0.3, xmax=6.3,
xtick style={color=black},
y grid style={darkgray176},
ylabel={Distance to true model},
ymin=2.6478495, ymax=4.2481405,
ytick style={color=black},
major tick length=0.1,
]
\addplot [semithick, steelblue31119180, mark=*, mark size=0.5, mark options={solid}]
table {%
0 3.67423
1 3.03171
2 2.72059
3 2.7753
4 2.72749
5 2.74628
6 2.73467
};
\addplot [semithick, darkorange25512714, mark=*, mark size=0.5, mark options={solid}]
table {%
0 3.67423
1 4.1754
2 2.72315
3 4.01052
4 3.68738
5 3.56958
6 2.89803
};
\addplot [semithick, forestgreen4416044, mark=*, mark size=0.5, mark options={solid}]
table {%
0 3.67423
1 3.26439
2 3.22171
3 3.22402
4 3.22061
5 3.20755
6 3.22071
};
\end{axis}

\end{tikzpicture} \begin{tikzpicture}[scale=0.8]

\definecolor{darkgray176}{RGB}{176,176,176}
\definecolor{darkorange25512714}{RGB}{255,127,14}
\definecolor{forestgreen4416044}{RGB}{44,160,44}
\definecolor{lightgray204}{RGB}{204,204,204}
\definecolor{steelblue31119180}{RGB}{31,119,180}

\begin{axis}[
width=0.31\textwidth,
height=0.27\textwidth,
legend cell align={left},
legend style={
  fill opacity=0.8,
  draw opacity=1,
  text opacity=1,
  at={(0.97,0.03)},
  anchor=south east,
  draw=none
},
tick align=outside,
tick pos=left,
x grid style={darkgray176},
xlabel={Iter},
xmin=-0.3, xmax=6.3,
xtick style={color=black},
y grid style={darkgray176},
ylabel={Support consensus},
ymin=0.0555, ymax=1.0345,
ytick style={color=black},
major tick length=0.1
]
\addplot [semithick, steelblue31119180, mark=*, mark size=0.5, mark options={solid}]
table {%
0 0.1
1 0.975
2 0.988
3 0.99
4 0.988
5 0.989
6 0.988
};
\addlegendentry{\small proxCSL}
\addplot [semithick, darkorange25512714, mark=*, mark size=0.5, mark options={solid}]
table {%
0 0.1
1 0.895
2 0.969
3 0.962
4 0.957
5 0.97
6 0.972
};
\addlegendentry{\small sCSL}
\addplot [semithick, forestgreen4416044, mark=*, mark size=0.5, mark options={solid}]
table {%
0 0.1
1 0.334
2 0.699
3 0.619
4 0.685
5 0.622
6 0.687
};
\addlegendentry{\small sDANE}
\end{axis}

\end{tikzpicture}
    \vspace*{-1.0em}
    \caption{Convergence of CSL methods to the true solution on a synthetic dataset with known generating model. \newowa outperforms the baselines in terms of model $L_2$ distance (left) as well as identifying whether a given weight should be nonzero (right).}
    \label{fig:synthetic_conv}
\end{figure}

Finally we demonstrate empirically that our method converges to the true solution on a sparse dataset with known generating model.
We simulate data where $\mathcal{X}$ has dimension $N=100000$, $d=1000$,
where each feature is mixture distribution of $U(0, 1)$ and 0 values.
The true solution $w^*$ has 100 nonzero coefficients,
and $\mathcal{Y}$ is sampled as Bernoulli from $\mathrm{expit}(X w^*)$.
Using 64 partitions the data is full-rank on each partition
in order to satisfy the strong convexity assumption.

In this data generation model, the assumptions of Thm~\ref{thm:general_bound} are satisfied so we expect convergence in the $L_2$-norm.
We train \newowa as well as baselines sSCL and sDANE with $\lambda$ set to give approximately 100 nonzeros. 
Convergence in $L_2$-norm and support recovery are shown in Fig.~\ref{fig:synthetic_conv}.
Here \newowa converges to a known solution vector faster and more accurately than the baselines.

\section{Conclusion} \label{sec:conclusion}
In this work we present \newowa which performs global updates on a distributed sparse logistic regression model in an efficient and scalable manner.
To do this, we develop a proximal Newton solver which solves a CSL-type problem effectively along with adaptive proximal regularization.
We assess our method on much larger and higher-dimension datasets than prior work,
and conclude that \newowa has much better accuracy than prior works across a wide range of model sparsities.

While we have accelerated a widely used form of logistic regression, other bespoke or customized versions still need improvement or could be integrated in the future. Coresets may be a viable approach to improving information sharing without sending all data~\cite{lu2023coreset,samadian2020unconditional} and areas like differentially privacy rely heavily on logistic regression but have far more expensive and challenging optimization problems due to the required randomness~\cite{khanna23challenge,NEURIPS2022_1add3bbd,10516654,10.1145/3605764.3623910,raff2023scaling}.

\bibliographystyle{ACM-Reference-Format}
\bibliography{refs}

\appendix

\section{Additional timing information} \label{app:more_timing}

In this section we conduct more detailed timing analysis
to breakdown the \newowa runtime
in terms of inner steps of the algorithm.
The timing breakdown helps to compare the costs of computation vs communication in our algorithms.

\begin{table}[h]
    \small
    \centering
    \begin{tabular}{lcccc}
        \toprule
        {\bf \newowa} & \multicolumn{2}{c}{ember-1M} & \multicolumn{2}{c}{criteo} \\
        \midrule
          nnz   &   1k  & 10k  & 1k &  10k \\
        \midrule
        Initial estimator & 9.47s & 7.38s & 22.8s & 21.0s \\
        Broadcast $w$ & 6.55s & 7.14s & 0.38s & 0.34s \\
        Collect grads &	3.23s & 2.24s & 1.06s &	0.95s \\
        Compute $\nabla \mathcal{L}(\hat{w})$ & 0.39s & 0.39s & 0.38s & 0.39s \\
        Full CSL update (Algo~\ref{alg:prox}) & 25.41s & 27.8s & 10.2s & 8.62s \\
        Single outer step & 2.54s & 2.78s & 1.02s & 0.86s \\
        \bottomrule
    \end{tabular}
    \caption{Detailed timing information for \newowa on two large datasets, at regularization values corresponding to 1k and 10k solution nonzeros.}
    \label{tab:timing}
\end{table}

Furthermore we compare \newowa with sCSL and sDANE on a large dataset to show the relative computation and communication times.

\begin{table}[h]
    \small
    \centering
    \begin{tabular}{lccc}
    \toprule
    criteo (10k nnz) 	& proxCSL 	& sCSL 	& sDANE \\
    \midrule
    Initial estimator 	& 21.0s 	& \textit{same} 	& \textit{same} \\
    Broadcast $w$ 	& 0.34s 	& \textit{same} 	& \textit{same} \\
    Collect grads 	& 0.95s 	& \textit{same} 	& \textit{same} \\
    Compute $\nabla \mathcal{L}(\hat{w})$ 	& 0.39s 	& \textit{same} 	& \textit{same} \\
    Full CSL update (1 node) 	& 8.62s 	& 9.26s 	& - \\
    Full CSL update (all nodes) 	& - 	& - 	& 27.1s \\
    Collect $w$'s 	& - 	& - 	& 0.95s \\
    Average final $w$ 	& - 	& - 	& 0.16s \\
    \bottomrule
    \end{tabular}
    \caption{Comparing CSL update timings of \newowa with baselines sCSL and sDANE. Since sDANE runs updates on all nodes, the update step is significantly longer.}
    \label{tab:more_timing}
\end{table}

\section{Sensitivity to hyperparameters} \label{app:detail}

Although our default \newowa sets relatively low maximum iterations,
these values are generally sufficient to ensure convergence of the objective function.
In this analysis we increase the iteration counts to guarantee full convergence and show the impact is minimal.
See following table.

\begin{table}[h]
    \centering
    \begin{tabular}{lcccc}
    \toprule
         &  OWA & $S=10,$ & $S=10,$ & $S=100,$\\
         &     &  $M=50$ & $M=1000$ & $M=1000$ \\
    \midrule
       amazon7  & 0.1111 	& 0.1057 	& 0.1057 	& 0.1057 \\
        ember100k 	& 0.3091 	& 0.2809 	& 0.2804 	& 0.2804 \\
        url 	& 0.1614 	& 0.1604 	& 0.1604 	& 0.1604 \\
    \bottomrule
    \end{tabular}
    \caption{Logistic regression objective values after running a single \newowa update with specified hyperparameters $S$ and $M$.}
    \label{tab:sensitivity}
\end{table}

We also provide additional parameter and computational details next.

\textbf{Optimizers.}
\textit{OWLQN:} We use default hyperparameters on OWL-QN for the baselines with 100 max iterations. We experimented with changing hyperparameters and increasing iterations but they did not affect the results.

\textit{LIBLINEAR:} When using LIBLINEAR to solve the initial distributed models, we set 20 max outer iterations and 50 max inner iterations. This is to obtain faster solutions since the solution is approximate anyway. This did not affect accuracy. All other parameters are default.
For the Full Data upper bound we use all default parameters.

\textbf{System details.} A single MPI experiment uses 16 machines, each of which has two AMD EPYC 7713 64-core processors. We limit to using 32 cores per machine so that the amount of inter-machine communication is non-trivial. The machines are connected via Infiniband HDR and deployed with Slurm.

\textbf{Computational complexity.} Given $S$ and $M$, as well as dataset sizes $n, D$, our solver is $O(SMnd)$ for dense data, and $O(SM z)$ for sparse, which is quite efficient. Note that here $z$ represents the number of non-zero elements in the dataset. This is the same computational complexity as newGLMNET.

\section{Elastic Net} \label{app:enet}

Our method readily extends to other sparse loss functions, including the Elastic Net-regularized logistic regression.
The following figure demonstrates \newowa with the Elastic Net penalty. 
The result is very comparable to the Lasso-regularized version.

\begin{figure}[h!]
    \centering
    \input{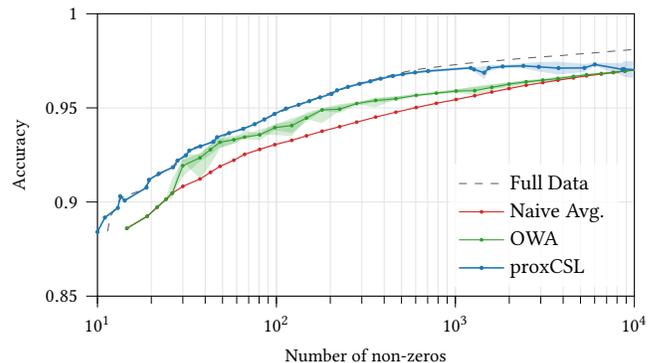}
    \caption{Number of nonzeros vs. test set accuracy for amazon7, using the Elastic Net-regularized objective.}
    \label{fig:enet}
\end{figure}

\end{document}